\newcommand{\D}{\mathcal{D}} % distribution
\newcommand{\pr}[1]{\left(#1\right)}
\newcommand{\pc}[1]{\left\{#1\right\}}
\newcommand{\ps}[1]{\left|#1\right|}
\newcommand{\on}{\textup{ON}}
\newcommand{\onf}{\textup{ONF}}
\newcommand{\uonf}{\overline{\onf}}
\newcommand{\good}{\mathcal{E}}
\newcommand{\npage}{n}
\newcommand{\nreq}{T}
\newcommand{\opt}{\textup{OPT}}
\newcommand{\OWP}{\textup{OWP}}
\newcommand{\MAB}{\textup{MAB}}
\newcommand{\OWPUW}{\textup{OWP-UW}}
\global\long\def\None{\textsc{Null}}%
\newcommand{\E}{\mathbb{E}} % expectation
\newcommand{\Prob}{\mathbb{P}} % probability
\newcommand{\I}{\mathbb{I}} %indicator
\newcommand{\ceil}[2][*]{\delim\lceil\rceil#1{#2}}
\newcommand{\floor}[2][*]{\delim\lfloor\rfloor#1{#2}}
\newcommand{\OPT}{\opt} %optimal cost with known weights
\newcommand{\Reg}{\mathcal{U}}
\newcommand{\LCB}{\textup{LCB}}
\newcommand{\UCB}{\textup{UCB}}
\title{Online Weighted Paging with Unknown Weights}
\author{%
  Orin Levy\thanks{Research conducted while the author was an intern at Amazon Science.} \\
  %Blavatnik school of Computer Science\\
  Tel-Aviv University \\
  \texttt{orinlevy@mail.tau.ac.il} \\
  % examples of more authors
  \And
  Noam Touitou \\
  Amazon Science\\
  \texttt{noamtwx@gmail.com} \\
  \AND
  Aviv Rosenberg\thanks{Research conducted while the author was at Amazon Science.} \\
  Google Research\\
  \texttt{avivros007@gmail.com} \\
  % \And
  % Coauthor \\
  % Affiliation \\
  % Address \\
  % \texttt{email} \\
  % \And
  % Coauthor \\
  % Affiliation \\
  % Address \\
  % \texttt{email} \\
}
\begin{document}

\maketitle

\begin{abstract}
Online paging is a fundamental problem in the field of online algorithms, in which one maintains a cache of $k$ slots as requests for fetching pages arrive online. 
In the weighted variant of this problem, each page has its own fetching cost; a substantial line of work on this problem culminated in an (optimal) $O(\log k)$-competitive randomized algorithm, due to Bansal, Buchbinder and Naor (FOCS'07).

Existing work for weighted paging assumes that page weights are known in advance, which is not always the case in practice.
For example, in multi-level caching architectures, the expected cost of fetching a memory block is a function of its probability of being in a mid-level cache rather than the main memory.
This complex property cannot be predicted in advance; over time, however, one may glean information about page weights through sampling their fetching cost multiple times.
We present the first algorithm for online weighted paging that does not know page weights in advance, but rather learns from weight samples.
In terms of techniques, this requires providing (integral) samples to a fractional solver, requiring a delicate interface between this solver and the randomized rounding scheme; we believe that our work can inspire online algorithms to other problems that involve cost sampling. 
\end{abstract}

    \section{Introduction}
    \textbf{Online weighted paging.}
In the online weighted paging problem, or $\OWP$, one is given a cache of $k$ slots, and requests for pages arrive online.
Upon each requested page, the algorithm must ensure that the page is in the cache, possibly evicting existing pages in the process.
Each page $p$ also has a weight $w_p$, which represents the cost of fetching the page into the cache;
the goal of the algorithm is to minimize the total cost of fetching pages.
Assuming that the page weights are known, this problem admits an $O(\log k)$-competitive randomized online algorithm, due to~\citet*{bansal2010simple,DBLP:journals/jacm/BansalBN12};
This is optimal, as there exists an $\Omega(\log k)$-competitiveness lower bound for randomized algorithms due to \cite{DBLP:journals/jal/FiatKLMSY91} (that holds even for the unweighted case). 
% \nt{expand here, talk a bit about k-server}

However, all previous work on paging assumes that the page weights are known \emph{in advance}. This assumption is not always justified; for example, the following scenario, reminiscent of real-world architectures, naturally gives rise to unknown page weights.
Consider a multi-core architecture, in which data can be stored in one of the following: a local ``L1'' cache, unique to each core; a global ``L2'' cache, shared between the cores; and the (large but slow) main memory. 
As a specific core requests memory blocks, managing its L1 cache can be seen as an $\OWP$ instance. 
Suppose the costs of fetching a block from the main memory and from the L2 cache are  $1$ and $\epsilon \ll 1$, respectively.
Then, when a core demands a memory block, the expected cost of fetching this block (i.e., its weight) is a convex combination of $1$ and $\epsilon$, weighted by the probability that the block is in the L2 cache; this probability can be interpreted as the demand for this block by the various cores. 
When managing the L1 cache of a core, we would prefer to evict blocks with low expected fetching cost, as they are more likely to be available in the L2 cache. 
But, this expected cost is a complicated property of the computation run by the cores, and estimating it in advance is infeasible; however, when a block is fetched in the above example, we observe a stochastic cost of either $1$ or $\epsilon$.
As we sample a given block multiple times, we can gain insight into its weight. 

% However, not all hope is lost: 
% In this work, we study online weighted paging where each fetching/eviction of a page yields an independent sample of its weight. (Where the number of cores is large, and the L2 cache changes often, the independence assumption is justified.)

\textbf{Multi-armed bandit.} 
The above example, in which we learn about various options through sampling, is reminiscent of the multi-armed bandit problem, or $\MAB$. In the cost-minimization version of this problem, one is given $\npage$ options (or arms), each with its own cost in $[0,1]$.
At each time step, the algorithm must choose an option and pay the corresponding cost; when choosing an option $p$, rather than learning its cost $w_p$, the algorithm is only revealed a sample from some distribution whose
expectation is $w_p$. 
In this problem, the goal is to minimize the \textbf{regret}, which is the difference between the algorithm's total cost and the optimal cost (which is to always choose the cheapest option). 
Over $\nreq$ time steps, the best known regret bound for this problem is $\tilde{O}(\sqrt{\npage \nreq})$, achieved through multiple techniques.~(See, e.g., \cite{slivkins2019introduction,lattimore2020bandit}).

\subsection{Our Results}\label{subsec:our-results}
We make the first consideration of $\OWP$ where page weights are not known in advance, and show that the optimal competitive ratio of $O(\log k)$ can still be obtained. 
Specifically, we present the problem of 
%$\OWP$ with unknown weights, or 
$\OWPUW$ (Online Weighted Paging with Unknown Weights), that combines $\OWP$ with bandit-like feedback.
In $\OWPUW$, every page $p$ has an arbitrary distribution, whose expectation is its weight $0 < w_p \le 1$. Upon fetching a page, the algorithm observes a random, independent sample from the distribution of the page. 
We present the following theorem for $\OWPUW$.

\begin{theorem}
    \label{thm:Competitiveness}
    There exists a randomized algorithm $\on$ for $\OWPUW$ such that, for every input $Q$,
    \[
        \mathbb{E}[\on(Q)] \le O(\log k) \cdot \opt(Q) + \tilde{O}(\sqrt{\npage \nreq}),
    \]
    where $\on(Q)$ is the cost of $\on$ on $Q$, $\opt(Q)$ is the cost of the optimal solution to $Q$, and the expectation is taken over both the randomness in $\on$ and the samples from the distributions of pages.
\end{theorem}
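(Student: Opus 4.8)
The plan is to build $\on$ by composing three ingredients: a fractional online solver for weighted paging that only needs current weight \emph{estimates}, a randomized rounding scheme that turns the fractional cache state into an integral one while fetching pages (and thereby generating samples), and a UCB-style confidence-bound mechanism that feeds optimistic (underestimated) weights into the fractional solver. The key conceptual point is that the fractional solver must be driven by \emph{lower confidence bounds} $\LCB_p$ on the weights: underestimating weights is the ``optimistic'' choice that both keeps the fractional cost comparable to $\opt$ (since we never overcharge relative to true weights) and bounds the exploration error by a bandit-type regret term. As pages get fetched during rounding, each fetch yields one sample, we update the empirical mean and the confidence radius, and we shrink $\LCB_p$ toward $w_p$; the number of fetches of page $p$ is, up to $O(\log k)$ factors, what the rounding scheme pays, so the total number of samples is tied to the algorithm's own cost.

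First I would state the fractional subroutine $\ApproxOPT$: an online primal-dual algorithm maintaining fractional variables $x_p \in [0,1]$ (amount by which $p$ is \emph{out} of cache) with $\sum_p x_p \ge n - k$, charging $w_p \, dx_p$ for increases, and achieving $O(\log k)$-competitiveness against the offline fractional optimum \emph{with respect to whatever weights it is told}. This is essentially the Bansal--Buchbinder--Naor fractional algorithm; the only modification is that at the time weight mass is moved on page $p$, the algorithm uses the current $\LCB_p$ rather than $w_p$. Second, I would invoke the online rounding scheme (again from the BBN line of work) converting the fractional solution online into a randomized integral cache with expected fetching cost $O(1)$ times the fractional cost; crucially this scheme only fetches a page when it is requested and currently outside the integral cache, which is exactly when we obtain a fresh sample. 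Third, I would set up the confidence bounds: after $m$ samples of page $p$, $\LCB_p = \max\{0,\ \hat w_p - c\sqrt{\log(nT)/m}\}$, so that with high probability $\LCB_p \le w_p$ always, and $w_p - \LCB_p \le 2c\sqrt{\log(nT)/m}$.

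The accounting then proceeds in three pieces. (i) \emph{Fractional cost under $\LCB$ weights vs.\ $\opt$:} since $\LCB_p \le w_p$ w.h.p., the offline fractional optimum measured with $\LCB$ weights is at most $\opt(Q)$, so $\ApproxOPT$'s cost under $\LCB$ weights is $O(\log k)\cdot\opt(Q)$; multiplying by the $O(1)$ rounding overhead gives the $O(\log k)\,\opt(Q)$ term. (ii) \emph{True cost vs.\ $\LCB$ cost of the integral algorithm:} each time page $p$ is actually fetched we pay $w_p$ but were ``budgeted'' only $\LCB_p$, an excess of $w_p - \LCB_p \le 2c\sqrt{\log(nT)/m_p}$ on the $m_p$-th fetch; summing $\sum_p \sum_{m=1}^{M_p} \sqrt{\log(nT)/m} = \tilde O\big(\sum_p \sqrt{M_p}\big)$ where $M_p$ is the number of fetches of $p$, and using $\sum_p M_p \le T$ with Cauchy--Schwarz gives $\tilde O(\sqrt{nT})$. (iii) The failure probability of the confidence bounds contributes a negligible additive term (at most $O(1)$ by a union bound over $nT$ events, absorbed into $\tilde O(\sqrt{nT})$).

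The main obstacle I anticipate is the \emph{interface} between the rounding scheme and the sampling process, i.e.\ making step (ii) rigorous. The subtlety is circular: how much we explore (how many samples of $p$ we get) is determined by how often $p$ is fetched, which is determined by the randomized rounding driven by the $\LCB$'s, which in turn depend on past samples; the rounding scheme from prior work is not designed to be ``queried'' for samples mid-execution, and one must check that feeding back updated $\LCB_p$ values into the still-running fractional solver does not break its competitiveness analysis (which assumed a fixed weight vector). Resolving this likely requires a careful argument that the fractional solver's guarantee is robust to weights that only \emph{decrease over time toward their true values} — i.e.\ a monotone-reweighting version of the primal-dual analysis — together with a martingale concentration argument handling the adaptivity of $M_p$. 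The bound $\sum_p M_p \le T$ also deserves care: a fetch happens only on a request, and there are $T$ requests, so this is immediate, but one should confirm the rounding scheme does not fetch on non-request steps.
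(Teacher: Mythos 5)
Your overall architecture matches the paper's: a fractional online solver driven by lower confidence bounds, a randomized rounding scheme that produces samples, and a UCB/LCB mechanism; and you correctly identify that the crux is the circular interface between the rounding and the fractional solver. However, there are two concrete gaps that the paper has to resolve with constructions your sketch does not supply, and one accounting mismatch.

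First, the interface. You propose to ``feed back updated $\LCB_p$ values into the still-running fractional solver'' and hope for a ``monotone-reweighting'' version of the analysis, but this leaves the fractional solver's behaviour dependent on random fetches, hence stochastic --- and the rounding scheme needs a \emph{deterministic} (given the sample sequence) fractional solution as a global reference against which it keeps its distribution over cache states consistent and balanced. The paper's fix is structural: the fractional solver demands a sample of $p$ at the deterministic event ``total fractional eviction $m_p$ crosses an integer,'' and the rounding side maintains a per-page sample buffer ($s_p$) so that it can always answer these demands (\cref{prop:RandomizedProvidesSamples}). Your proposal does not explain \emph{when} the fractional solver consumes samples, which is exactly what makes the coupling well-defined.

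Second, the rounding scheme. You invoke the Bansal--Buchbinder--Naor rounding ``again from prior work,'' but that scheme maintains balance over \emph{fixed} weight classes. Here the class of a page is determined by its $\UCB_p$, which changes over time, so a single sample can move a page across several class boundaries and invalidate balance in a cascade; the paper has to build a new $\RebalanceSubsets$ procedure and prove (\cref{lem:RebalanceProperties}) that the imbalance it introduces in lower classes is geometrically decaying so the rebalancing cost is controlled. Relatedly, the rounding scheme is not driven by $\LCB$ but by $\UCB$ (pessimistic classes). This also changes your accounting: the paper's intermediate quantity is $\uonf$, the fractional cost priced at $\UCB$ weights, and both \cref{lem:fractional} and \cref{lem:integral} are stated against $\uonf$, whereas your steps (i)--(ii) price the fractional cost at $\LCB$ weights and then try to patch up the gap to true weights per fetch. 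In the paper, the exploration regret $\sum_p\sum_i(\UCB_{p,i}-\LCB_{p,i})=\tilde{O}(\sqrt{nT})$ appears already at the fractional level, as a ``budget'' term in the potential function (\cref{lem:fractional-w-LCBs-UCBs}), and your direct ``excess per fetch'' argument would have to be redone along those lines. Your $\sum_p M_p\le T$ claim also needs the extra care the paper supplies: the rounding scheme sometimes evicts-and-refetches a just-requested page to obtain a fresh sample, and those extra evictions are separately charged (contributing the $+\npage$ in \cref{lem:integral}).
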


% \alert{
% }
Note that the bound in \cref{thm:Competitiveness} combines a competitive ratio of $O(\log k)$ with a regret (i.e., additive) term of $\tilde{O}(\sqrt{\npage \nreq})$. To motivate this type of bound, we observe that $\OWPUW$ does not admit 
%(a) logarithmic competitive ratio without regret, or (b) 
sublinear regret without a competitive ratio.
%For (a), observe that one can reduce multi-armed bandit of $k$ arms into $\OWPUW$ on a cache of size $k$ by creating a page for each arm, and adding another dummy page of weight $0$. The input would consist of repeating $k+1$ requests for the $k$ arm-pages followed by the dummy page. Evicting an arm-page to fetch the dummy page is then equivalent to pulling an arm on the bandit, and forces $\Omega(\sqrt{T})$ regret even when $\opt=1$; i.e., competitive ratio alone is not enough for $\OWPUW$. 
%For (b), 
Consider the lower bound of $\Omega(\log k)$ for the competitive ratio of paging; stated simply, one of $k+1$ pages of weight $1$ is requested at random. 
Over a sequence of $T$ requests, the expected cost of any online algorithm is $\Omega(T/k)$; meanwhile, the expected cost of the optimal solution is at most $O(T/(k \log k))$. 
(The optimal solution would be to wait for a maximal phase of requests containing at most $k$ pages, whose expected length is $\Theta(k\log k)$, then change state at constant cost.)
Without a competitive ratio term, the difference between the online and offline solutions is $\Omega(T/k)$, i.e., linear regret.
We note that this kind of bound appears in several previous works such as \cite{basu2019blocking,foussoul2023last}. 
%%%%%%%%%%%%%%%%%%%%%%%%%%%%%%%%%%%%%%%%%%%%%%%
%Regret w.r.t. the optimal solution times an approximation ratio exists in multiple papers, for example: \cite{basu2019blocking,chen2013combinatorial,kleinberg2018recharging,foussoul2023last}.
% We emphasize that: (a) \textbf{Obtaining a logarithmic competitive ratio without regret is infeasible.} Observe that one can reduce multi-armed bandit of $k$ arms into $\OWPUW$ on a cache of size $k$ by creating a page for each arm, and adding another dummy page of weight $0$. The input would consist of repeating $k+1$ requests for the $k$ arm-pages followed by the dummy page. Evicting an arm-page to fetch the dummy page is then equivalent to pulling an arm on the bandit, and forces $\Omega(\sqrt{T})$ regret even when $OPT=1$; i.e., competitive ratio alone is not enough for $\OWPUW$. 
% (b) \textbf{Obtaining sublinear regret without a competitive ratio is infeasible.}
% To state simply the lower bound of $\Omega(\log k)$ for the competitive ratio of paging, one of $k+1$ weight-$1$ pages is requested at random. Over a sequence of $T$ requests, the expected cost of any online algorithm is $\Omega(T/k)$; meanwhile, the expected cost of the optimal solution is at most $O(T/(k \log k))$. (The optimal solution would be to wait for a maximal phase of requests containing at most $k$ pages, whose expected length is $\Theta(k\log k)$, then change state at constant cost.)
% Without a competitive ratio term, the difference between the online and offline solutions is $\Omega(T/k)$.
%i.e. linear in $T$. 
%%%%%%%%%%%%%%%%%%%%%%%%%%%%%%%%%%%%%%%%%%%%%%%%%
As $\OWPUW$ generalizes both standard $\OWP$ and $\MAB$, both the competitive ratio and regret terms are asymptotically tight: a competitiveness lower bound of $\Omega(\log k)$  is known for randomized algorithms for online (weighted) paging~\citep{DBLP:journals/jal/FiatKLMSY91}, and a regret lower bound of $\tilde{\Omega}(\sqrt{nT})$ is known for $\MAB$~\citep{lattimore2020bandit}.

\subsection{Our Techniques}
\textbf{Interface between fractional solution and rounding scheme.}
Randomized online algorithms are often built of the following components:
\begin{enumerate}
    \item A deterministic, $\alpha$-competitive online algorithm for a fractional relaxation of the problem.
    \item An online randomized rounding scheme that encapsulates any online fractional algorithm, and has expected cost $\beta$ times the fractional cost. 
\end{enumerate}
Combining these components yields an $\alpha\beta$-competitive randomized online (integral) algorithm. 

For our problem, it is easy to see where this common scheme fails. 
The fractional algorithm cannot be competitive without sampling pages; but, pages are sampled by the rounding scheme! 
Thus, the competitiveness of the fractional algorithm is not independent of the randomized rounding, which must provide samples. 
One could think of addressing this by feeding any samples obtained by the rounding procedure into the fractional algorithm. However, as the rounding is randomized, this would result in a non-deterministic fractional algorithm. As described later in the paper, this is problematic: the rounding scheme demands a globally accepted fractional solution against which probabilities of cache states are balanced.

Instead, we outline a sampling interface between the fractional solver and the rounding scheme. 
Once the total fractional eviction of a page reaches an integer, the fractional algorithm will pop a sample of the page from a designated sampling queue, and process that sample. 
On the other side of the interface, the rounding scheme fills the sampling queue and ensures that when the fractional algorithm demands a sample, the queue will be non-empty with probability $1$. 

\textbf{Optimistic fractional algorithm, pessimistic rounding scheme.}
When learning from samples, one must balance the exploration of unfamiliar options and the exploitation of familiar options that are known to be good.
A well-known paradigm for achieving this balance in multi-armed bandit problems is \emph{optimism under uncertainty}. 
Using this paradigm to minimize total cost, one maintains a lower confidence bound (LCB) for the cost of an option, which holds with high probability, and tightens upon receiving samples; then, the option with the lowest LCB is chosen. 
As a result, one of the following two cases holds: either the option was good (high exploitation); or, the option was bad, which means that the LCB was not tight, and henceforth sampling greatly improves it (high exploration).

Our fractional algorithm for weighted paging employs this method. It optimistically assumes that the price of moving a page is cheap, i.e., is equal to some lower confidence bound (LCB) for that page. 
It then uses multiplicative updates to allocate servers according to these LCB costs.
The optimism under uncertainty paradigm then implies that the fractional algorithm learns the weights over time. 

However, the rounding scheme behaves very differently. 
Unlike the fractional algorithm, the (randomized) rounding scheme is not allowed to use samples to update the confidence bounds; otherwise, our fractional solution would behave non-deterministically. 
Instead, the rounding scheme takes a pessimistic view: it uses an \emph{upper} confidence bound (UCB) as the cost of a page, thus assuming that the page is expensive. 
Such pessimistic approaches are common in scenarios where obtaining additional samples is not possible (e.g., offline reinforcement learning \citep{levine2020offline}), but rarely appear as a component of an online algorithm as we suggest in this paper.

\subsection{Related Work}

The online paging problem is a fundamental problem in the field of online algorithms. 
In the unweighted setting, the optimal competitive ratio for a deterministic algorithm is $k$, due to~\citet{DBLP:journals/cacm/SleatorT85}.
Allowing randomization improves the best possible competitive ratio to $\Theta(\log k)$ \citep{DBLP:journals/jal/FiatKLMSY91}.
As part of a line of work on weighted paging and its variants (e.g., ~\cite{DBLP:journals/algorithmica/Young94,DBLP:journals/jal/ManasseMS90,DBLP:journals/mp/Albers03,DBLP:journals/algorithmica/Irani02,DBLP:conf/stoc/FiatM00,DBLP:conf/stoc/BansalBN08,DBLP:conf/stoc/Irani97}), the best competitive ratios for weighted paging were settled, and  were seen to match the unweighted setting: $k$-competitiveness for deterministic algorithms, due to~\citet{DBLP:journals/siamdm/ChrobakKPV91}; and $\Theta(\log k)$-competitiveness for randomized algorithms, due to~\citet{DBLP:journals/jacm/BansalBN12}.

% \nt{Paragraph on online paging, unweighted+weighted}

Online (weighted) paging is a special case of the $k$-server problem, in which $k$ servers exist in a general metric space, and must be moved to address requests on various points in this space; the cache slots in (weighted) paging can be seen as servers, moving in a (weighted) uniform metric space. The $\Theta(k)$ bound on optimal competitiveness in the deterministic for paging also extends to general $k$-server~\citep{DBLP:journals/jal/ManasseMS90,DBLP:journals/jacm/KoutsoupiasP95}.
However, allowing randomization, a recent breakthrough result by~\citet{DBLP:conf/stoc/BubeckCR23} was a lower bound of $\Omega(\log^2 k)$-competitiveness for $k$-server, diverging from the $O(\log k)$-competitiveness possible for paging.

% \nt{Paragraph on $k$-server}
% {\color{red} TODO}

Multi-Armed Bandit (MAB) is one of the most fundamental problems in online sequential decision making, often used to describe a trade-off between exploration and exploitation.
It was extensively studied in the past few decades, giving rise to several algorithmic approaches that guarantee optimal regret. The most popular methods include Optimism Under Uncertainty (e.g., the UCB algorithm \citep{lai1985asymptotically,auer2002finite}), Action Elimination~\citep{even2006action}, Thompson Sampling \citep{thompson1933likelihood,agrawal2012analysis} and Exponential Weights (e.g., the EXP3 algorithm~\citep{auer2002nonstochastic}).
For a comprehensive review of the MAB literature, see \cite{slivkins2019introduction,lattimore2020bandit}.

    \section{Preliminaries}
    In $\OWPUW$, we are given a memory cache of $k$ slots. A sequence of $T$ page requests then arrives in an online fashion; we denote the set of requested pages by $P$, define $n:=\ps{P}$, and assume that $n > k$.
% 
% We study the \emph{online weighted paging problem}, where the weights are \emph{unknown}.
% In this problem, we are given a set of $n$ pages, denoted by $P$, and a fast memory (cache) that can contain up to $k$ pages out of the total $n$. \nt{I'll rephrase this in a bit. }
Each page $p$ has a corresponding weight $0 < w_p  \le 1$; the weights are not known to the algorithm. 
Moreover, every page $p$ has a distribution $\D_p$ supported in $(0,1]$, such that $\E_{x\sim\D_p}[x] = w_p$.

% which, unlike previous work, we do not assume to be known.
% Instead, we need to learn the weights.
The online scenario proceeds in $T$  rounds.\footnote{We make a simplifying assumption that $T$ is known in advanced. This can be easily removed using a standard doubling (see, e.g., \citet{slivkins2019introduction}).}
In each round $t \in \pc{1,2,\ldots, T}$:
% \vspace{-1em}
\begin{itemize}%[topsep=2pt,itemsep=0pt,parsep=0pt,partopsep=0pt]
    \item A page $p_t \in P$ is requested.
    \item If the requested page is already in the cache, then it is immediately served.
    \item Otherwise, we experience a cache miss, and we must fetch $p_t$ into the cache; if the cache is full, the algorithm must evict some page from the cache to make room for $p_t$. 
    % %\item When a cache miss occurs, we suffer a penalty of $w_{p_t}$ for fetching $p_t$.
    % \item To fetch the page, we probably need to evict other pages $p$ from the cache, and suffer a penalty of $w_p$ for evicting $p$.
    \item Upon evicting any page $p$ from the cache, the algorithm receives an independent sample from $\D_p$.
\end{itemize}
The algorithm incurs cost when evicting pages from the cache: when evicting a page $p$, the algorithm incurs a cost of $w_p$\footnote{Note that charging an $\OWP$ solution for evicting rather than fetching pages is standard; indeed, with the exception of at most $k$ pages, every fetched page is subsequently evicted, and thus the difference between eviction and fetching costs is at most $k$. Moreover, as we analyze additive regret, note that $k \le \sqrt{nT}$, implying that using fetching costs would not affect the bounds in this paper. Finally, note that we sample upon eviction rather than upon fetching, which is the ``harder'' model.}.
Our goal is to minimize the algorithm's total cost of evicting pages, denoted by $\on$, and we measure our performance by comparison to the total cost of the optimal algorithm, denoted by $\OPT$.
We say that our algorithm is $\alpha$-competitive with $\mathcal{R}$ regret if $\E [\on] \le \alpha \cdot \OPT + \mathcal{R}$.
% When the weights are known, \cite{bansal2010simple} give a $O(\log k)$-competitive algorithm with $0$ regret, but with unknown weights we must incur additional regret for learning the weights online. \alert{why is the last sentence in the preliminaries?}
    
    \section{Algorithmic Framework and Analysis Overview}
    \label{sec:framework}
    We present an overview of the concepts and algorithmic components we use to address $\OWPUW$.
We would like to follow the paradigm of solving a fractional problem online, and then randomly rounding the resulting solution; however, as discussed in the introduction, employing this paradigm for $\OWPUW$ requires a well-defined interface between the fractional solver and the rounding procedure. 
Thus, we present a fractional version of $\OWPUW$ that captures this interface.

\textbf{Fractional $\OWPUW$.}
% \emph{Variables and feasibility.}
In fractional $\OWPUW$, one is allowed to move fractions of servers, and a request for a page is satisfied if the total server fraction at that point sums to $1$. 
More formally, for every page $p\in P$ we maintain an amount $y_p \in [0,1]$ which is the fraction of $p$ \emph{missing} from the cache; we call $y_p$ the fractional \textbf{anti-server} at $p$. 
(The term anti-server comes from the related $k$-server problem.) 
The feasibility constraints are:
\begin{enumerate}
    \item At any point in the algorithm, it holds that $\sum_{p\in P} y_p \ge \npage - k$. (I.e., the total number of pages in the cache is at most  $k$.)
    \item After a page $p$ is requested, it holds that $y_p =0$. (I.e., there exists a total server fraction of $1$ at $p$.) 
\end{enumerate}
Evicting an $\epsilon$ server fraction from $p$ (i.e., increasing $y_p$ by $\epsilon$) costs $\epsilon \cdot w_p$.

% \emph{Cost of eviction.}
% The uncertainty element of the problem is in the cost of evicting a page. 
% At any point in time and for every page $p\in P$, the fractional algorithm must maintain an upper confidence bound $\UCB_p \ge w_p$. 
% Evicting an $\epsilon$ server fraction from $p$ (i.e., increasing $y_p$ by $\epsilon$) then costs $\UCB_p$. 

\emph{Sampling.}
The fractional algorithm must receive samples of pages over time in order to learn about their weights. 
An algorithm for fractional $\OWPUW$ receives a sample of a page $p$ whenever the total fraction of $p$ evicted by the algorithm reaches an \emph{integer}. 
In particular, the algorithm obtains the first sample of $p$ (corresponding to $0$ eviction) when $p$ is first requested in the online input.

% \alert{
% \begin{enumerate}
%     \item When the page $p$ is first requested. \aviv{why do we get a sample in this case?}\Orin{Because you can evict it right away and then add it again, this will add only factor of 2 to the competitive ratio.}\aviv{got it, is it something worth mentioning?}\Orin{IDK :)}
%     \item When the total fractional eviction of $p$ reaches any natural number. 
    
%     \nt{I'm rephrasing this to be a single condition.}
% \end{enumerate}
% }
% Note that this fractional version of $\OWPUW$ is indeed a relaxation of the integer version: A feasible solution for the integer version can be converted to a solution to the fractional version of equal cost by setting $\UCB_p = w_p$ for every $p\in P$. 

\textbf{Algorithmic components.}
We present the fractional algorithm and randomized rounding scheme.

\emph{Fractional algorithm.}
In \cref{Sec:fractional-algo}, we present an algorithm $\onf$ for fractional $\OWPUW$.
Fixing the random samples from the pages' weight distributions, the fractional algorithm $\onf$ is deterministic.
For every page $p\in P$, the fractional algorithm maintains an upper confidence bound $\UCB_p$ and a lower confidence bound $\LCB_p$.
These confidence bounds depend on the samples provided for that page; we define the \emph{good event} $\good$ to be the event that at every time and for every page $p\in P$, it holds that 
$
    \LCB_p \le w_p \le \UCB_p.
$
We later show that $\good$ happens with high probability, and analyze the complementary event separately\footnote{Specifically, we show that the complementary event $\overline{\good}$ happens with probability at most $\frac{1}{\npage \nreq}$, and that the algorithm's cost is at most $\npage \nreq$ times the optimal cost when it happens.}.
Thus, we henceforth focus on the good event.

The following lemma bounds the cost of $\onf$ subject to the good event. 
In fact, it states a stronger bound, that applies also when the cost of evicting page $p$ is the upper confidence bound $\UCB_p \ge w_p$. 
\begin{lemma}
    \label{lem:fractional}
    Fixing any input $Q$ for fractional $\OWPUW$, and assuming the good event, it holds that 
    \[
        \onf(Q) \le \overline{\onf}(Q) \le O(\log k)\cdot \opt(Q) + \tilde{O}(\sqrt{\npage \nreq})
    \]
    where $\overline{\onf}$ is the cost of the algorithm on the input where the cost of evicting a page $p$ is $\UCB_p \ge w_p$. 
\end{lemma}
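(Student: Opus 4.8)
The plan is to establish the second inequality; the first, $\onf(Q) \le \uonf(Q)$, is immediate, because $\onf$ chooses its fractional moves purely as a function of the samples it has received (never of $w_p$ or $\UCB_p$), so running it with eviction cost $\UCB_p$ in place of $w_p$ traces out the identical trajectory, along which every eviction is now charged $\UCB_p \ge w_p$ under $\good$.

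For the second inequality I would first decompose. Let $\onf_{\LCB}(Q)$ denote the cost of $\onf$'s trajectory when an eviction of a fraction of page $p$ is charged the value of $\LCB_p$ \emph{at the time of that eviction}. Then
\[
  \uonf(Q) = \onf_{\LCB}(Q) + \sum_{\text{evictions of }p} \big(\UCB_p - \LCB_p\big)\cdot(\text{fraction evicted}),
\]
and I bound the two terms by $O(\log k)\cdot\opt(Q)$ and $\tilde O(\sqrt{\npage\nreq})$ respectively.

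\emph{Regret term.} By construction the confidence width satisfies $\UCB_p - \LCB_p = \tilde O(1/\sqrt{m})$ after $p$ has been sampled $m$ times (and it is always at most $1$). The sampling interface of fractional $\OWPUW$ delivers a new sample of $p$ exactly when its cumulative eviction crosses an integer, with the first sample at cumulative eviction $0$; hence, while the $j$-th unit of $p$ is being evicted, $p$ has been sampled $j$ times. Writing $E_p$ for the total fraction of $p$ ever evicted, page $p$ contributes at most $\sum_{j=1}^{\lceil E_p\rceil}\tilde O(1/\sqrt j) = \tilde O(\sqrt{E_p})$ to the sum. Summing over $p\in P$ and using Cauchy--Schwarz gives $\tilde O\big(\sqrt{\npage\sum_{p}E_p}\big)$. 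It remains to note $\sum_p E_p = O(\nreq)$: each request lowers a single anti-server by at most $1$, and $\onf$ restores $\sum_p y_p = \npage - k$ by exactly that amount of total upward (eviction) movement, so each request induces at most one unit of eviction; the only other upward movement comes from the first appearance of each of the $\npage \le \nreq$ pages, each costing one extra unit. This gives the $\tilde O(\sqrt{\npage\nreq})$ bound.

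\emph{Competitive term.} Since $\LCB_p \le w_p$ at all times under $\good$, the $\LCB$-cost of any fixed feasible fractional solution is at most its true cost, so the offline optimum evaluated with the time-varying $\LCB$ costs is at most $\opt(Q)$; it thus suffices to show $\onf$ is $O(\log k)$-competitive against the offline optimum evaluated with the very $\LCB$ costs it uses. This is the guarantee of the primal--dual multiplicative-weights fractional algorithm for weighted paging of Bansal, Buchbinder and Naor, which $\onf$ instantiates: the argument is local in time, charging the algorithm's cost increase at each step against $O(\log k)$ times the increase of a dual solution whose value never exceeds the offline cost, and it carries over when the per-page cost used at a step is the current $\LCB_p$. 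The subtlety --- and the step I expect to be the main obstacle --- is that $\LCB_p$ varies during the run; this is handled by observing that $\LCB_p$ can change only when a sample of $p$ arrives, i.e.\ only when $y_p \in \{0,1\}$, so such a change incurs no cost and does not break dual feasibility. Combining the two bounds with the decomposition completes the proof; everything outside the primal--dual adaptation is bookkeeping over the sampling interface together with a Cauchy--Schwarz.
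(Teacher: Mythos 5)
The overall plan is close to the paper's: both separate a ``cost at the LCB prices'' part (bounded against $\opt$) from a ``confidence-width'' part (bounded by $\tilde O(\sqrt{nT})$ via Cauchy--Schwarz), and both use the monotonicity of $\LCB$/$\UCB$ together with the good event. The paper proves the competitive part directly by a potential-function argument (an adaptation of Bansal--Buchbinder--Naor, \cite{bansal2010simple}) where the potential $\Phi_t = -2\sum_{p\notin C^*_t}\LCB_p\log\frac{y_p+\eta}{1+\eta} + \sum_p(\UCB_p-\LCB_p)(n_p-m_p)$ includes a second ``exploration budget'' term; you propose instead to invoke primal--dual competitiveness of BBN with the current $\LCB$'s and handle the time-variation of $\LCB_p$ as a separate corner case.

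The corner case is exactly where your argument has a genuine gap. You write that $\LCB_p$ can change only when a sample of $p$ arrives, ``i.e.\ only when $y_p\in\{0,1\}$,'' and hinge the whole handling of time-varying costs on that. That claim is false. In fractional $\OWPUW$ a sample of $p$ is delivered whenever the \emph{cumulative} eviction $m_p$ hits an integer, and $m_p$ is never reset when $p$ is fetched (only $y_p$ is set to $0$). So after the first request ($y_p=0$, $m_p=0$), if $p$ is evicted up to $y_p=0.7$ (so $m_p=0.7$), then requested again ($y_p=0$, $m_p$ still $0.7$), then evicted up to $y_p=0.3$, we reach $m_p=1.0$ with $y_p=0.3$: a sample arrives and $\LCB_p$ jumps while $y_p$ is strictly interior. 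At such a moment the quantity you would need to control -- the effect of a changing cost coefficient on a partially evicted page -- is nontrivial: in the BBN-style accounting the ``credit'' attached to page $p$ scales like $\LCB_p\cdot\log\frac{1+\eta}{y_p+\eta}$, and raising $\LCB_p$ increases that credit by up to $(\LCB_{p,i}-\LCB_{p,i-1})\log(1+1/\eta)$ (not zero), which has to be charged somewhere. The paper absorbs exactly this via the extra term $2\log(1+1/\eta)\sum_p\LCB_p$ in its regret definition, and absorbs the simultaneous drop in $\UCB_p-\LCB_p$ via the exploration-budget term $(\UCB_p-\LCB_p)(n_p-m_p)$, which is ``recharged'' from $0$ to $\UCB_{p,i}-\LCB_{p,i}$ precisely because $m_p=n_p$ at sampling time. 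Your decomposition and Cauchy--Schwarz step recover the $\sum_p\sum_i(\UCB_{p,i}-\LCB_{p,i})$ part of the regret, but the $\LCB$-update contribution is neither zero nor obviously free in a dual-feasibility sense; you would need to add it to your regret budget explicitly, as the paper does, rather than dismiss it through an invariant ($y_p\in\{0,1\}$ at sampling) that does not hold.
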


\emph{Randomized rounding.} 
In Section~\ref{sec:rounding} we present the randomized algorithm $\on$ for (integral) $\OWPUW$. 
It maintains a probability distribution over integral cache states by holding an instance of $\onf$, to which it feeds the online input.
For the online input to constitute a valid \emph{fractional} input, the randomized algorithm ensures that samples are provided to $\onf$ when required. 
In addition, the randomized algorithm makes use of $\onf$'s exploration of page weights; specifically, it uses the $\UCB$s calculated by $\onf$.

\begin{lemma}
    \label{lem:integral}
    Fixing any input $Q$ for (integral) $\OWPUW$, assuming the good event $\good$, it holds that
    \[
        \mathbb{E}[\on(Q)] \le O(1) \cdot \overline{\onf}(Q) + \npage
    \]
    where $\overline{\onf}(Q)$ is the cost of the algorithm on $Q$ such that the cost of evicting a page $p$ is $\UCB_p \ge w_p$. 
\end{lemma}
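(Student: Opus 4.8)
The plan is to analyze the randomized rounding scheme $\on$ and show that it is $O(1)$-competitive against the fractional solution $\onf$, \emph{when both are charged the upper confidence bound} $\UCB_p$ for evicting page $p$. The starting point is the classical online rounding framework for weighted paging / weighted uniform-metric $k$-server (in the spirit of \citet{bansal2010simple,DBLP:journals/jacm/BansalBN12}): $\on$ maintains a distribution over integral cache states so that, for each page $p$, the probability that $p$ is \emph{missing} from the cache equals (or is tightly coupled to) the fractional anti-server value $y_p$ maintained by $\onf$. First I would make precise the coupling invariant between the marginal probabilities of $\on$'s integral states and the fractional anti-servers of $\onf$, and verify that this invariant is maintained both when a page is requested (where $y_p$ drops to $0$ and $\on$ must bring $p$ into every cache state in its support) and when $\onf$ continuously evicts fractions of pages (where $\on$ adjusts its distribution by evicting integral pages at a rate matched to $dy_p$). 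The cost accounting then reduces to: (i) the expected eviction cost of $\on$ from maintaining this coupling under $\onf$'s fractional moves, and (ii) the one-time costs of at most $\npage$ that arise from boundary effects (e.g., the very first time each page is requested and must be fetched into all cache states, or rounding artifacts when a page's marginal first becomes positive), which is where the additive $+\npage$ term comes from.

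The key steps, in order, would be: (1) state and prove the marginal-coupling invariant, i.e., that at all times the probability mass of integral cache states missing page $p$ tracks $y_p^{\onf}$ up to the discretization; (2) describe the rounding update rule — when $\onf$ increases $y_p$ by $dy_p$, $\on$ shifts probability mass from states containing $p$ to states missing $p$, and symmetrically brings pages back when $y_p$ decreases, always doing so in a way that respects the hard cache-size constraint $\sum_p y_p \ge \npage - k$ in every integral state; (3) bound the expected cost of these mass shifts: each unit of probability moved off of page $p$ corresponds to an integral eviction of $p$ costing $\UCB_p$ in expectation, and summing over the execution the total is $O(1) \cdot \sum_p \UCB_p \cdot (\text{total fractional eviction of } p) = O(1)\cdot \overline{\onf}(Q)$; (4) handle requests separately — when $p_t$ is requested, $\onf$ sets $y_{p_t}=0$ and $\on$ must ensure $p_t$ is present in all states, which costs (in expectation, over the states that had $p_t$ missing) an amount proportional to the fractional eviction mass of $p_t$ that accumulated since its last request, again chargeable to $\overline{\onf}$; (5) account for the $+\npage$ slack: the first fetch of each distinct page, and any integrality rounding that cannot be charged against fractional movement, contributes at most $O(1)$ per page since every page has weight (hence $\UCB$) at most... — actually here one must be slightly careful, since $\UCB_p$ can exceed $1$; but the additive term is stated as $\npage$ and $\UCB_p \le 1 + \tilde O(\cdot)$-type bounds under $\good$ keep per-page boundary cost $O(1)$, or alternatively these boundary costs are absorbed into the $O(1)\cdot\overline{\onf}(Q)$ term and $\npage$ covers only the genuinely uncharged first-fetch events. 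I would verify which of these two bookkeeping choices the authors intend and phrase accordingly.

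The main obstacle I expect is step (2)–(3): making the rounding update rule simultaneously (a) maintain the marginal coupling with $\onf$'s fractional solution, (b) never violate the integral cache-size constraint in \emph{any} state in the support (this is the well-known subtlety in rounding fractional paging — one cannot move probability mass on page $p$ independently of other pages, because each integral state must keep exactly $k$ pages, so evictions of $p$ must be paired with insertions of some other page $q$ whose marginal presence probability permits it), and (c) charge the resulting cost to $\overline{\onf}$ rather than to the true weights $w_p$. The coupling-plus-constraint part is standard for known weights, but the twist here is that the rounding uses $\UCB_p$ as the cost — and since the lemma only asserts a bound in terms of $\overline{\onf}(Q)$ (the fractional cost \emph{also} computed with $\UCB$s), the accounting is actually cleaner than it looks: we never need to relate $\UCB_p$ to $w_p$ inside this lemma, we just need the rounding's eviction cost, measured with $\UCB$s, to be $O(1)$ times $\onf$'s eviction cost measured with the \emph{same} $\UCB$s. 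So the real work is purely the combinatorial rounding argument, essentially reproving the $\beta = O(1)$ rounding factor for weighted paging while being careful that $\on$ reads the $\UCB$ values maintained by its embedded $\onf$ instance (which is deterministic given the samples, so the $\UCB$s are well-defined and common to all states). A secondary, minor obstacle is cleanly isolating the $+\npage$ additive loss and confirming it does not secretly hide a dependence on $\sum_p \UCB_p$.
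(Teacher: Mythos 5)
Your high-level outline — couple the integral distribution's marginals to $\onf$'s fractional anti-servers, bound the cost of maintaining the coupling under fractional moves, charge first-fetches to an additive $O(n)$ — matches the skeleton of the paper's argument (their ``consistency'' property is exactly your marginal-coupling invariant). But there are two genuine gaps in what you sketch, and the second is the crux of the paper's technical contribution here.

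First, you acknowledge the cache-size-constraint subtlety but propose to handle it by ``shifting probability mass in a way that respects the hard cache-size constraint,'' deferring to ``standard'' known-weight rounding. The paper makes this concrete by maintaining a \emph{balanced} property over \emph{weight classes} (pages binned by $\UCB_p$ into geometric ranges, base $6$): for every class $j$ and every anti-cache state $S$ in the support, $\ps{S \cap P_{\ge j}}$ must lie in $\pc{\floor{Y_j},\ceil{Y_j}}$ where $Y_j$ is the total fractional anti-server of pages in class $\ge j$. This is what makes validity (cache size $\le k$) an automatic corollary and, more importantly, is what lets the cost of a rebalancing step be charged at the right scale ($\UCB$ of the moved page). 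Your sketch has no analogue of the class structure, so step (3) as written — ``each unit of probability moved off $p$ costs $\UCB_p$, summed over the execution equals $O(1)\uonf$'' — is not a proof; you never exhibit a rounding rule for which the summed cost actually telescopes to $O(1)\uonf$.

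Second, and more seriously, you treat the UCBs as static ``cost labels'' for the duration of the rounding argument (``the twist ... is that the rounding uses $\UCB_p$ as the cost ... the accounting is actually cleaner than it looks''). In fact the UCBs \emph{change over time} as $\onf$ receives samples, which means a page's weight class can drop mid-execution. This immediately breaks the balanced property even though no anti-server changed, and restoring it by moving a page between states is a new source of integral eviction cost that has no analogue in \citet{DBLP:journals/jacm/BansalBN12}. The paper handles this with a dedicated rebalancing procedure and a cost lemma showing that a single class change creates imbalance at most $1$ in each affected class, that this imbalance can be eliminated at total cost $O(6^{j})$ where $j$ is the old class of the re-classed page, and that this $O(\UCB_p)$ cost can in turn be charged to a fractional eviction. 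The proof also has to show the imbalance compounds only geometrically as the procedure descends through classes (the factor-of-$3$ blowup per class is exactly why the classes are geometric with ratio $6$, not $2$). Your proposal never identifies this cost source, so it cannot yield the lemma; ``being careful that $\on$ reads the $\UCB$ values'' understates the issue, which is not that the UCBs are used as prices but that they \emph{move}, dragging the class partition — and hence the invariant you are trying to maintain — along with them.

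A smaller point: you worry that $\UCB_p$ might exceed $1$; it cannot, since $\UCB_{p,1}=1$ and the UCBs are nonincreasing. The $+\npage$ term comes from a different place than you suspect: partly from the first couple of samples per page (each costing at most $1$) and partly from the fact that total fractional fetching of a page exceeds its total fractional eviction by at most $1$, so fetch-induced rebalancing is chargeable only up to an additive $O(1)$ per page.
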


\Cref{fig:AlgorithmicInterface} provides a step-by-step visualization of the interface between the fractional algorithm and the rounding scheme over the handling of a page request. 

\begin{figure}

    \centering
    \hspace*{\fill}
    \subfigure{\includegraphics[width=0.2\columnwidth]{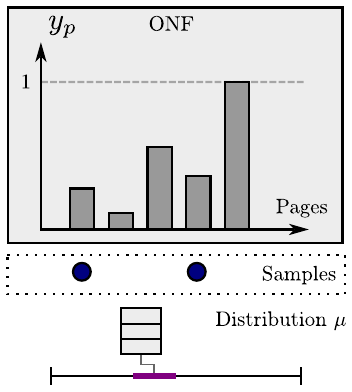}}
    \hspace*{\fill}
    \subfigure{\includegraphics[width=0.2\columnwidth]{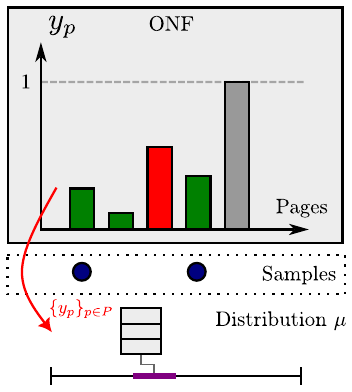}}
    \hspace*{\fill}
    \subfigure{\includegraphics[width=0.2\columnwidth]{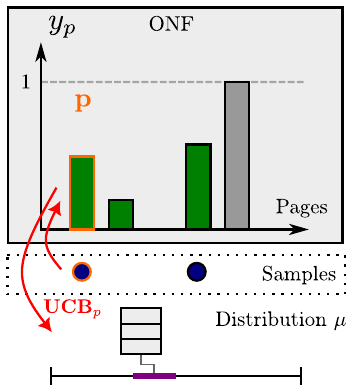}}
    \hspace*{\fill}
    \subfigure{\includegraphics[width=0.2\columnwidth]{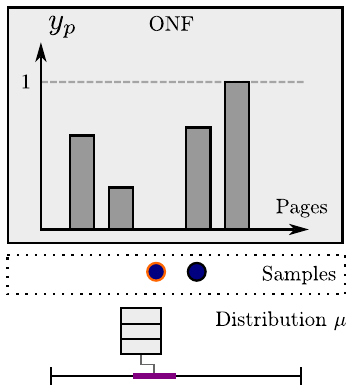}}
    \hspace*{\fill}

    % \centering
    % \hspace*{\fill}
    % \subfigure{\includegraphics[width=0.35\columnwidth]{figures/drawing1}}
    % \hspace*{\fill}
    % \subfigure{\includegraphics[width=0.35\columnwidth]{figures/drawing2}}
    % \hspace*{\fill}
    % \\
    % \hspace*{\fill}
    % \subfigure{\includegraphics[width=0.35\columnwidth]{figures/drawing3}}
    % \hspace*{\fill}
    % \subfigure{\includegraphics[width=0.35\columnwidth]{figures/drawing4}}
    % \hspace*{\fill}
    % \label{subfig:AlgorithmicInterface3}
    % \label{subfig:AlgorithmicInterface4}
    %     \label{subfig:AlgorithmicInterface1}
        
    % \subfloat[][]{
    %     \label{subfig:AlgorithmicInterface1}
    %     \includegraphics[width=0.35\columnwidth]{figures/drawing1}
    % }
    % \hspace*{\fill}
    % \hspace*{\fill}
    % \subfloat[][]{
    %     \label{subfig:AlgorithmicInterface2}
    %     \includegraphics[width=0.35\columnwidth]{figures/drawing2}
    % }\hspace*{\fill}
    % \\
    % \hspace*{\fill}
    % \subfloat[][]{
    %     \label{subfig:AlgorithmicInterface3}
    %     \includegraphics[width=0.35\columnwidth]{figures/drawing3}
    % }
    % \hspace*{\fill}
    % \hspace*{\fill}
    % \subfloat[][]{
    %     \label{subfig:AlgorithmicInterface4}
    %     \includegraphics[width=0.35\columnwidth]{figures/drawing4}
    % }
    % \hspace*{\fill}
    
    \raggedright\small
    This figure visualizes the running of the algorithm over a request in the input. 
    (a) shows the state prior to the arrival of the request. 
    The integral algorithm maintains an instance of the fractional algorithm $\onf$, as well as a distribution over integral cache states that upholds some properties w.r.t. $\onf$ (specifically, the consistency property and the subset property); note that these properties are a function of both the anti-server state $\pc{y_p}_{p\in P}$ and the upper-confidence bounds $\pc{\UCB_{p}}_{p\in P}$ in $\onf$.
    The integral algorithm also maintains a set of page samples, to be demanded by $\onf$ at a later time. 
    In (b), a page is requested (in red); thus, the fractional algorithm must fetch it into the cache, i.e., set its anti-server to $0$. To maintain feasibility, the fractional algorithm will increase the anti-server at other pages in which some server fraction exists (in green). These changes in anti-server are also fed into the integral algorithm, which modifies its distribution to maintain consistency and the subset property w.r.t. $\onf$.
    In (c), $\onf$ reaches integral total eviction of a page $p$, and demands a sample from the integral algorithm. 
    (We show that such a sample always exists when demanded.)
    The bound $\UCB_{p}$ is updated in $\onf$, and is then fed to the integral algorithm to maintain the desired properties.
    After this sample, continuous increasing of anti-server continues until feasibility is reached in (d).
    Then, the integral algorithm ensures that a sample exists for the requested page $p_t$, sampling the page if needed. 
    (As $p_t$ now exists in $\mu$ with probability $1$, sampling is done through evicting and re-fetching $p_t$.)
    
    \caption{Visualization of the interface between the fractional and integral algorithms}
    \label{fig:AlgorithmicInterface}
\end{figure}

% \emph{The bad event $\overline{\good}$.}
% Note that \cref{lem:fractional,lem:integral} are conditioned on the good event $\good$; to complete the analysis, we show that:
% \begin{itemize}
%     \item $\Pr[\overline{\good}] \le \frac{1}{NT}$.
%     \item Given $\overline{\good}$, it holds that $\on \le NT\cdot \opt$.
% \end{itemize}

    \section{Algorithm for Fractional \OWPUW}
    \label{Sec:fractional-algo}
    
We now describe our algorithm for the fractional relaxation of $\OWPUW$, proving \cref{lem:fractional}.
% , and prove that it guarantees $O(\log k)$ competitive ratio with $\tilde{O} (\sqrt{n T})$ regret.
% 
% \subsection{The Fractional Algorithm}
Our fractional algorithm, presented in Algorithm~\ref{alg:FOWC} below, 
uses samples provided by the rounding scheme to learn the weights.
A new sample for page $p$ is provided and processed whenever the sum of fractional movements (in absolute value) $m_p$ hits a natural number.
(At this point the number of samples $n_p$ is incremented.)
The algorithm calculates non-increasing UCBs and non-decreasing LCBs that will be specified later in Section~\ref{sec:confidence-bounds} and guarantee with high probability, for every page $p \in P$ and time step $t \in [1,T]$, 
$
\LCB_p \le w_p \le \UCB_p.
$

At each time step $t$, upon a new page request $p_t$, the algorithm updates its feasible fractional cache solution $\{ y_p \}_{p \in P}$. 
The fractions are computed using optimistic estimates of the weights, i.e., the LCBs, in order to induce exploration and allow the true weights to be learned over time. 
After serving page $p_t$ (that is, setting $y_{p_t} = 0$), the algorithm continuously increases the anti-servers of all the other pages in the cache until feasibility is reached (that is, until $\sum_{p \in P} y_{p} = n - k$).
The fraction $y_p$ for some page $p$ in the cache is increased proportionally to $\frac{y_p + \eta}{\LCB_{p}}$, which is our adaption of the algorithmic approach of~\cite{bansal2010simple} to the unknown-weights scenario.
Finally, to fulfil its end in the interface, the fractional algorithm passes its feasible fractional solution to the rounding scheme together with pessimistic estimates of the weights, i.e, the UCBs.
% \RestyleAlgo{boxed}
\SetAlgoVlined
\begin{algorithm}[t]
    \caption{Fractional Online Weighted Caching with Unknown Weights\label{alg:FOWC}}
    Set $\eta \gets 1/k$ and $y_p \gets 1$ for every $p \in P$.
    
    \For{time $t = 1,2,...,T$}{
        Page $p_t \in P$ is requested.
        
        \ContInc{$y_{p}, m_{p}$ in proportion to $\frac{y_p + \eta}{\LCB_{p}}$ for every $p \in P\setminus \pc{p_t}$ where $y_p < 1$}{
            \label{line:FractionalContinuousIncrease}
            
            \If{$m_p$ reaches an integer for some $p \in P\setminus \pc{p_t}$}{
                \textbf{receive sample} $\widetilde{w}_p$ for $p$. \label{line:FractionalSampleEvicted}
                
                set $n_p \gets n_p + 1$.
                
                call \UpdateConfBounds{$p, \widetilde{w}_p$}. \tcp*[h]{recalculate confidence bounds for $p$}
            }
            
            \lIf{$\sum_{p\in P} y_p = n-k$}{\Break from the continuous increase loop. }
        }
        
        % \tcp*[h]{\nt{make some comment here about randomized sampling}}
        
        \If{this is the first request for $p_t$}{
            \textbf{receive sample} $\widetilde{w}_{p_t}$ for $p_t$. \label{line:FractionalSampleRequested}
            
            define $m_{p} \gets 0, n_p \gets 1$.
            
            call \UpdateConfBounds{$p_t, \widetilde{w}_{p_t}$}. \tcp*[h]{calculate confidence bounds for $p_t$}
        }
    }
\end{algorithm}
\subsection{Analysis}

In this analysis section, our goal is to bound the amount $\overline{\onf}$ with respect to the UCBs and LCBs calculated by the algorithm; i.e., to prove \cref{lem:fractional-w-LCBs-UCBs}.
\cref{lem:cb-properties} and \cref{lem:Regret-upper-bound} from \cref{sec:confidence-bounds} then make the choice of confidence bounds concrete, such that combining it with \cref{lem:fractional-w-LCBs-UCBs} yields the final bound for the fractional algorithm, i.e.,  \cref{lem:fractional}. 
\begin{lemma}
    \label{lem:fractional-w-LCBs-UCBs}
    Fixing any input $Q$ for fractional $\OWPUW$, and assuming the good event, it holds that 
    \[
        \overline{\onf}(Q) \le O(\log k)\cdot \opt(Q) + 
        \sum_{p \in P} \sum_{i=1}^{n_p} \brk*{\UCB_{p,i} - \LCB_{p,i}}
        +
        2\log\brk{1 + 1 / \eta}\sum_{p\in P} \LCB_p
        .
    \]
    where \textbf{(a)} $\overline{\onf}$ is the cost of the algorithm on the input such that the cost of evicting a page $p$ is $\UCB_{p} \ge w_p$, and \textbf{(b)} $\UCB_{p,i}, \LCB_{p,i}$ are the values of $\UCB_p$ and $\LCB_p$ calculated by the procedure  $\UpdateConfBounds$ (found in~\cref{sec:confidence-bounds}) immediately after processing the $i$'th sample of $p$, and \textbf{(c)} $\LCB_p$ is the value after the last sample of page $p$ was processed.
\end{lemma}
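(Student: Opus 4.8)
The plan is to recognize that, once the weight-samples are fixed, the anti-server trajectory produced by \cref{alg:FOWC} coincides with that of the fractional weighted-paging algorithm of \citet{bansal2010simple,DBLP:journals/jacm/BansalBN12}, run with smoothing parameter $\eta=1/k$ but with each true weight $w_p$ replaced by the \emph{current} lower confidence bound $\LCB_p$ --- a quantity that is piecewise constant in time and only increases as samples of $p$ arrive. The proof then splits into two parts: (i) charging the UCB-priced cost $\overline{\onf}(Q)$ against the cost of that same trajectory when increments are priced at $\LCB_p$; and (ii) bounding this ``$\LCB$-priced cost'' $c_{\LCB}$ by a potential argument robust to the discrete increases of the confidence bounds.

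For (i) I fix the samples (so $\onf$ is deterministic) and fix a lazy optimal integral solution with anti-servers $\pc{z_p}_{p\in P}$, which changes only at request times, where it fetches $p_t$ and, if needed, evicts a single page $q$ at cost $w_q$. Since $\onf$ runs one trajectory, $\overline{\onf}(Q)=\sum_p\UCB_p\cdot(\text{total increase of }y_p)$ and $c_{\LCB}:=\sum_p\LCB_p\cdot(\text{total increase of }y_p)$ use the same increments at different prices. Both $\UCB_p$ and $\LCB_p$ change only when a new sample of $p$ is processed, i.e., exactly when $m_p$ crosses an integer, and $m_p$ (which aggregates both continuous evictions and discrete fetch-resets of $y_p$) increases by at most $1$ between consecutive samples; hence the increase of $y_p$ over the $i$-th sampling epoch of $p$ is at most $1$, and
\[
\overline{\onf}(Q)-c_{\LCB}=\sum_{p\in P}\sum_{i=1}^{n_p}(\UCB_{p,i}-\LCB_{p,i})\cdot(\text{increase of }y_p\text{ in epoch }i)\le\sum_{p\in P}\sum_{i=1}^{n_p}(\UCB_{p,i}-\LCB_{p,i}),
\]
using $\UCB_{p,i}\ge\LCB_{p,i}$.

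For (ii) I bound $c_{\LCB}$ using the BBN-style potential $\Phi=\sum_{p\in P}\LCB_p\,g(y_p,z_p)$ with $g(y,z)=(z+\eta)\log\frac{z+\eta}{y+\eta}-(z-y)$, which satisfies $0\le g\le(1+\eta)\log(1+1/\eta)\le 2\log(1+1/\eta)$ and $g(z,z)=0$, and I track $\Phi$ over four event types. \emph{Initialization:} $\onf$ and (without loss of generality) the optimal solution start with an empty cache, so $y_p=z_p=1$ and $g(y_p,z_p)=0$, hence $\Phi=0$. \emph{An optimal eviction of a page $q$} raises $z_q$ from $0$ to $1$ and thus $\Phi$ by $\LCB_q\,(g(y_q,1)-g(y_q,0))\le 2\log(1+1/\eta)\,w_q$, which I charge to $\opt$; the accompanying fetch (which zeroes $z_{p_t}$ and, on $\onf$'s side, $y_{p_t}$) only decreases $\Phi$ since $g\ge 0$. \emph{A sampling update of $p$} leaves $\Phi$ unchanged through $\UCB_p$ (absent from $\Phi$) and raises it through $\LCB_p$ by at most $(\Delta\LCB_p)\cdot 2\log(1+1/\eta)$, so over all samples of all pages this contributes at most $2\log(1+1/\eta)\sum_p\LCB_p$ (each $\LCB_p$ rising from $0$). \emph{A continuous increase of $\onf$ at a request} takes place, after splitting the step at any sampling update it crosses, within intervals on which $z$ and every $\LCB_p$ are fixed and the rule is \emph{exactly} the BBN fractional update; the potential argument underlying $O(\log k)$-competitiveness of that algorithm --- which shows, essentially, that the total re-eviction cost of the continuous steps is paid for by the $\Phi$-drops they cause, the $O(\log k)$ arising because the optimal solution can defer its evictions --- then applies on each such interval. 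Summing all four contributions and using $\Phi\ge 0$ gives $c_{\LCB}\le O(\log k)\,\opt(Q)+2\log(1+1/\eta)\sum_p\LCB_p$; with (i) this is the lemma.

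I expect the main obstacle to be the last event type: one must check that the per-request cost guarantee of the BBN fractional analysis, stated for a fixed weight vector, genuinely survives being applied interval-by-interval with the evolving weights $\LCB_p$ --- that the argument is local (a per-event bound on incurred cost and on $\Delta\Phi$, with no global normalization broken when different pages' bounds change at different times), that a continuous-increase step straddling a sampling update of some page is handled by the splitting above, and that the constants (the $O(\log k)$ in front of $\opt(Q)$ and the $2$ in the $\log(1+1/\eta)\sum_p\LCB_p$ term) come out as stated. A secondary, mechanical point is verifying the ``at most $1$ increase per sampling epoch'' claim used in (i), since $m_p$ counts both types of movement of $y_p$.
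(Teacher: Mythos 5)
Your decomposition into (i) an epoch-counting bound on $\uonf - c_{\LCB}$ and (ii) a potential bound on $c_{\LCB}$ is, at the level of mechanism, the same as the paper's single-potential argument: the paper's second potential term $\sum_p(\UCB_p-\LCB_p)(n_p-m_p)$ is precisely what your step (i) telescopes (its net decrease during continuous movement is $\uonf-c_{\LCB}$, and its net increase across sampling events is $\sum_p\sum_i(\UCB_{p,i}-\LCB_{p,i})$), while the paper's first potential term plays the role of your step (ii). Your step (i) is correct, including the ``at most $1$ per epoch'' claim.

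The difficulty you anticipate in step (ii) is real, but it is not merely a constant-chasing issue: the particular $g$ you chose has a sign defect. Since $\partial g/\partial y=(y-z)/(y+\eta)$ and, under the update rule, $\LCB_p\,dy_p=(y_p+\eta)\,dy/N$, a continuous-eviction step gives
\[
dc_{\LCB}+d\Phi=\frac{dy}{N}\Big[\sum_{p\in S\setminus\{p_t\}}(2y_p+\eta)\;-\;\big|S\setminus C^*_t\big|\Big],
\]
and this bracket is typically of order $|S|-k>0$ (e.g.\ when $\sum_{p\in S\setminus\{p_t\}}y_p\approx|S|-k$ and $|S\setminus C^*_t|=|S|-k$), so the potential drop does not cover the LCB-priced cost. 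Two features of $g$ are to blame. First, for $p\notin C^*_t$ the decay rate is $\LCB_p(1-y_p)/(y_p+\eta)$, which both vanishes as $y_p\to 1$ and is missing a factor of $2$; the argument needs a constant decay $2\LCB_p/(y_p+\eta)$ so that each such page contributes $-2\,dy/N$, enough to beat the $\approx 2(|S|-k)\,dy/N$ cost. Second, for $p\in C^*_t$ one has $\partial g(y,0)/\partial y=y/(y+\eta)>0$, so evicting a page that $\opt$ keeps \emph{raises} $\Phi$, the wrong direction. The paper's choice $\Phi^{(1)}=-2\sum_{p\notin C^*_t}\LCB_p\log\frac{y_p+\eta}{1+\eta}$ fixes both at once: pages in $C^*_t$ contribute nothing, and the derivative is $-2\LCB_p/(y_p+\eta)$. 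If you replace your $g$-based potential by $\Phi^{(1)}$ --- which is nonnegative, vanishes initially, increases by at most $2\log(1+1/\eta)\,w_q$ on an $\opt$-eviction of $q$ (via $\LCB_q\le w_q$ under $\good$), and increases by at most $2\log(1+1/\eta)\,\Delta\LCB_p$ on a sampling update --- the rest of your write-up goes through unchanged and yields the lemma.
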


\begin{proof}[Proof sketch]
    We prove the lemma using a potential analysis. In~\cite{bansal2010simple}, a potential function was introduced that encodes the discrepancy between the state of the optimal solution and the state of the algorithm. In our case, we require an additional term which can be viewed as a fractional exploration budget. This budget is ``recharged'' upon receiving a sample; the cost of this recharging goes into a regret term. 
\end{proof}

    \section{Randomized Rounding}
    \label{sec:rounding}
    \begin{figure}[t]
    \centering
    \hspace*{\fill}
    \subfigure{\includegraphics[width=0.3\columnwidth]{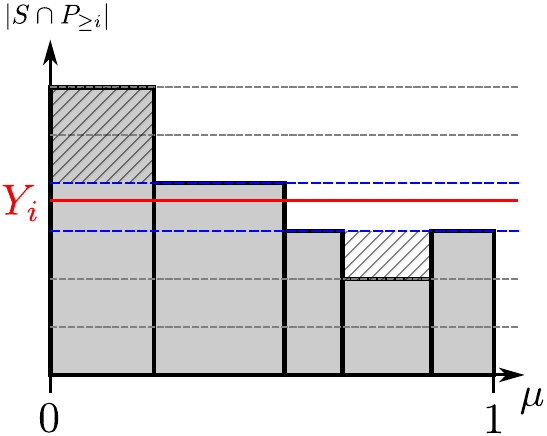}}
    \hspace*{\fill}
    \subfigure{\includegraphics[width=0.3\columnwidth]{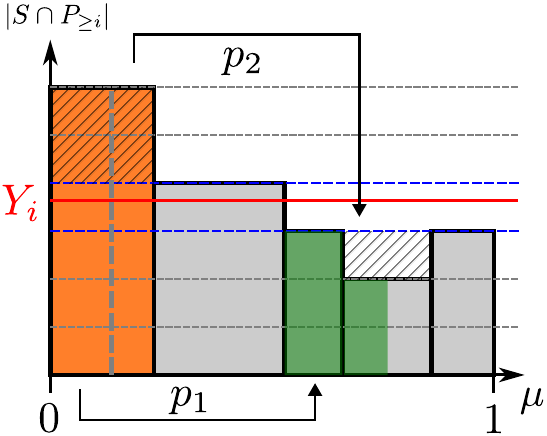}}
    \hspace*{\fill}
    \subfigure{\includegraphics[width=0.3\columnwidth]{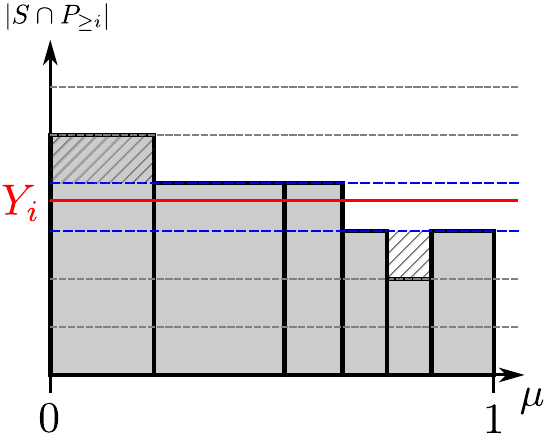}}
    \hspace*{\fill}
    
    % \subfloat[][]{
    %     \label{subfig:Rebalancing1}
        
    % }
    % \hspace*{\fill}
    % \hspace*{\fill}
    % \subfloat[][]{
    %     \label{subfig:Rebalancing2}
    %     \includegraphics[width=0.3\columnwidth]{figures/rebalancing2}
    % }
    % \hspace*{\fill}
    % \hspace*{\fill}
    % \subfloat[][]{
    %     \label{subfig:Rebalancing3}
    %     \includegraphics[width=0.3\columnwidth]{figures/rebalancing3}
    % }
    % \hspace*{\fill}
    
    \raggedright\small
    This figure visualizes a single step in $\RebalanceSubsets$. 
    Subfigure (a) shows the distribution of anti-cache states prior to this step; specifically, the $x$ axis is the probability measure, and the $y$-axis is the number of pages of class $i$ and above in the anti-cache, i.e., $m := \ps{S\cap P_{\ge i}}$. The red line is $Y_i$, which through consistency, is the expectation of $m$; the blue dotted lines are thus the allowed values for $m$, which are $\ps{\ceil{Y_i}, \floor{Y_i}}$. The total striped area in the figure is the imbalance measure, formally defined in \cref{defn:imbalance}.
    Subfigure (b) shows a single rebalancing step; we choose the imbalanced anti-cache $S$ that maximizes $\ps{m- Y_i}$; in our case, $m > Y_i$, and thus we match its measure with an identical measure of anti-cache states that are below the upper blue line, i.e., can receive a page without increasing imbalance. 
    Then, a page of class $i$ is handed from $S$ to every matched state $S'$; note that every matched state might get a different page from $S$, but some such page in $S\setminus S'$ is proven to exist. 
    Finally, Subfigure (c) shows the state after the page transfer; note the decrease in imbalance that results. The $\RebalanceSubsets$ procedure performs such steps until there is no imbalance; then, the procedure would advance to class $i-1$. 
    \caption{Example of a rebalancing step}
    \label{fig:Rebalancing}
\end{figure}

This section describes a randomized algorithm for (integral) $\OWPUW$, which uses \cref{alg:FOWC} for fractional $\OWPUW$ to maintain a probability distribution over valid integral cache states, while obtaining and providing page weight samples to \cref{alg:FOWC}.
The method in which the randomized algorithm encapsulates and tracks the fractional solution is inspired by~\cite{DBLP:journals/jacm/BansalBN12}, which maintains a balanced property over weight classes of pages. However, as the weights are unknown in our case, the classes are instead defined using the probabilistic bounds maintained by the fractional solution (i.e., the $\UCB$s). But, these bounds are dynamic, and change over the course of the algorithm; the imbalance caused by these discrete changes increases exponentially during rebalancing, and thus requires a more robust rebalancing procedure.

\begin{algorithm}
    \caption{Randomized rounding algorithm for $\OWPUW$ \label{alg:randomized}}
    
    \Initialization{}{
        Let $\onf$ be an instance of \cref{alg:FOWC}, that maintains a fractional anti-server allocation $\pc{y_p}_{p\in P}$.
        
        Define $\mu$ to be a distribution over cache states, initially containing the empty cache state with probability $1$. 
        
        For every $p\in P$, let $s_p \gets \None$. 
    }
    \EFn(\tcp*[h]{called upon a request for page $p$}){\UponRequest{$p$}}{
        pass the request for $p$ to $\onf$.
        
        \While(\tcp*[h]{loop of \cref{line:FractionalContinuousIncrease} in Alg. \ref{alg:FOWC}}){$\onf$ is handling the request for $p$}{
            \If{$\onf$ increases $y_{p'}$ by $\epsilon$, for some $p'\in P$}{
                add $p'$ to the anti-cache in an $\epsilon$-measure of states without $p'$.
                \label{line:RR_EvictionConsistency}
            
                % identify $\epsilon$-measure of cache states in which there is no anti-server at $p'$ and add anti-server at $p'$.

                call $\RebalanceSubsets$.
                \label{line:RR_RebalanceUponEviction}
            }
            
            \If{$\onf$ decreases $y_{p'}$ by $\epsilon$, for some $p'\in P$}{
                remove $p'$ from the anti-cache in an $\epsilon$-measure of states with $p'$.
                \label{line:RR_FetchingConsistency}
            
                % identify $\epsilon$-measure of cache states in which there is an anti-server at $p'$, and remove the anti-server from $p'$. 

                call $\RebalanceSubsets$.
                \label{line:RR_RebalanceUponFetching}
            }
            
            \If(\tcp*[h]{sample due to \cref{line:FractionalSampleEvicted} of Alg. \ref{alg:FOWC}}){$\onf$ samples a page $p' \in P$}{
                provide $s_{p'}$ as a sample to $\onf$, and set $s_{p'} \gets \None$.

                call $\RebalanceSubsets$. \tcp*[h]{rebalance due to change in $\UCB_{p'}$. }
                \label{line:RR_RebalanceUponSampling1}
            }

        }
        % , and maintain a consistent distribution with the subset property relative to $\pc{y_p}$. \nt{break this into another method? }
        
        \lIf{$s_p = \None$}{
            evict and re-fetch $p$ to obtain weight sample $\tilde{w}_p$, and set  $s_p \gets \tilde{w}_p$. \label{line:RandomizedSampleRequested}
        }
        
        \If(\tcp*[h]{sample due to \cref{line:FractionalSampleRequested} of Alg. \ref{alg:FOWC}}){$\onf$ requests a sample of $p$}{
            provide $s_{p}$ as a sample to $\onf$, and set $s_{p} \gets \None$.

            call $\RebalanceSubsets$. \tcp*[h]{rebalance due to change in $\UCB_p$. }
            \label{line:RR_RebalanceUponSampling2}
        }

    }
\end{algorithm}

\begin{algorithm}
    \caption{Rebalancing procedure for randomized algorithm     \label{algo:rebalancing}}

    \Fn{\RebalanceSubsets}{
        % let $ \UCB_{\max} := \max_{p\in P}\UCB_p$ and $\UCB_{\min} := \min_{p\in P}\UCB_p$.
        
        let $j_{\max}$ be the maximum class that is not balanced.
        
        let $j_{\min} := \ceil{ \log (\UCB_{\min})}$, where $\UCB_{\min} := \min_{p\in P}\UCB_p$.
        
        for every class $j$, let $P_j := \pc{p \in P \middle| \ceil{\log(\UCB_p)} = j}$.
        
        \For{$j$ from $j_{\max}$ down to $j_{\min}$ }{\label{line:RebalanceOuterLoop}
        
            let $P_{\ge j} := \bigcup_{j' \ge j}{P_{j'}}$.
        
            let $Y_j := \sum_{p\in P_{\ge j}} y_p$.

            % \tcp*[h]{repeatedly eliminate cache states that violate the balanced property.}

            \While(\tcp*[h]{iteratively eliminate imbalanced states}){ $\exists S$ s.t. $\mu(S) > 0$ and $|S \cap P_{\ge j}| \notin \pc{\ceil{Y_j}, \floor{Y_j}}$
            % is a violation in any cache state
            }{
            
            % let $S$ for $\mu(S) > 0$ 
            
                choose such $S$ that maximizes $\ps{m - Y_j}$, where $m:= \ps{S \cap P_{\ge j}}$.
                
                \If{$m \ge \ceil{Y_j} + 1$}{
                    Match the $\mu(S)$ measure of $S$ with an identical measure of anti-cache states with at most $\ceil{Y_j} -1$ pages from $P_{\ge j}$.
                    
                    \ForEach{anti-cache state $S'$ matched with $S$ at measure $x \le \mu(S)$}{
                        identify a page $p \in P_{j}$ such that $p\in S\setminus S'$.
                        
                        remove $p$ from the anti-cache in the $x$ measure of $S$, and insert it into the anti-cache in the $x$ measure of $S'$. 
                    }
                    
                    % For every such measure of $S$ matched with some anti-cache state $S'$, 
                    % move $p$ from $S$ to $S'$. 
                }
            
                \If{$m \le \floor{Y_j} - 1$}{
                    Match the $\mu(S)$ measure of $S$ with an identical measure of anti-cache states with at least $\floor{Y_j} +1$ pages from $P_{\ge j}$.
                    
                    \ForEach{anti-cache state $S'$ matched with $S$ at measure $x_{S'} \le \mu(S)$}{
                        identify a page $p \in P_{j}$ such that $p\in S'\setminus S$.
                        
                        remove $p$ from the anti-cache in the $x$ measure of $S'$, and insert it into the anti-cache in the $x$ measure of $S$. 
                    }
                    % For every such measure of $S$ matched with some cache state $S'$, identify a page $p \in P_j$ that has an anti-server in $S'$ but not in $S$; move $p$ from $S'$ to $S$. \alert{Consider rephrasing}
                }
            }
        }\label{line:while}
    }
\end{algorithm}

% This algorithm 

% This section describes a randomized rounding scheme that encapsulates an algorithm for fractional $\OWPUW$ and maintains a probability distribution over valid integral cache states. 
% In addition, 
% translates the fractional solution to an integral one. 
% 
% \subsection{The Randomized Algorithm}
% \alert{
% Unlike previous works, in our case the page weights are unknown. Thus, we design our rounding mechanism to behave pessimistically in the following manner.
% The rounding scheme uses the upper confidence bound (UCB) as the weight estimation of each page. Under the good event, we are promised that $w_p \leq \UCB_p$. Hence, the UCBs provides a pessimistic estimation of the true weights, which provides an upper bound on the total cost incurred due to an sub-optimal page eviction. 
% Our rounding scheme uses the UCBs to specify a probability distribution $\mu$ over sets of pages missing from the cache. 
% % For conciseness, let $M_t$ denote the set of pages missing from the cache at round $t$. 
% }

Following the notation in the previous sections, we identify each cache state with the set of pages \emph{not} in the cache. 
Observing the state of the randomized algorithm at some point in time, let $\mu(S)$ be the probability that $S\subseteq P$ is the set of pages missing from the cache, also called the \emph{anti-cache}. 
% \nt{try to simplify using this term.}
For the algorithm to be a valid algorithm for $\OWPUW$, the cache can never contain more than $k$ pages; this is formalized in the following property.
% For each set of pages $S \subseteq P$, $\mu(S)$ is the probability that the pages are missing from the cache are exactly those in $S$.
% \nt{I don't understand the validity requirement as written, let's maybe talk about this}
% This yields a \emph{validity requirement} which means that the cache will never contain more then $k$ pages. Formally, 
%
\begin{definition}[valid distribution]\label{def:valid}
    A probability distribution $\mu$ is valid, if for any set $S \subseteq P$ with $\mu(S) > 0$ it holds that $|S|\geq \npage -k$.
    %$\sum_{p \in M_t} \brk*{1-\I\brk[s]*{p \in S}} \leq k$, or alternatively,
    % \begin{align*}
    %     % \sum_{p \in P} \I\brk[s]*{p \in S}

    %     .
    % \end{align*}
\end{definition}
%

% ========

% However, the updated distribution might not be a valid distribution, meaning, it does not satisfies~\cref{def:valid}.
Instead of maintaining the distribution's validity, we will maintain a stronger property that implies validity. This property is the \emph{balanced} property, involving the UCBs calculated by the fractional algorithm.

For every page $p$, we define the $i$'th UCB class to be $P_i := \pc{p \in P : 6^{i} \leq \UCB_p < 6^{i+1}}$. 
(Note that $\UCB_p \in (0,1]$.)
We also define $P_{\ge j} := \bigcup_{i\ge j} P_{j}$, the set of all pages that their UCB is at least $6^j$.

Let $\{y_{p}\}_{p \in P}$ be the fractional solution. The balanced property requires that, for every set $S$ such that $\mu(S) > 0$ and every index $j$, the number of pages in $S$ 
of class at least $j$ is the same as in the fractional solution, up to rounding.  
Formally, we define the balanced property as follows.

\begin{definition}[balanced distribution]
\label{defn:balanced}
A probability distribution $\mu$ has the balanced subsets property with respect to $y$, if for any set $S \subseteq P$ with $\mu(S) > 0$, the following holds for all $j $:
\begin{align*}
   \Bigg \lfloor 
   \sum_{i \geq j}
   \sum_{p \in P_i} y_p \Bigg \rfloor
   \leq
   \sum_{i \geq j}
   \sum_{p \in P_i} \I[p \in S]
   \leq
   \Bigg \lceil
   \sum_{i \geq j}
   \sum_{p \in P_i} y_p \Bigg \rceil
   .
\end{align*}
\end{definition}

Choosing the minimum UCB class in \cref{defn:balanced}, and noting that $\sum_{p\in P} y_p \ge n-k$ through feasibility, immediately yields the following remark.

\begin{remark}
    \label{rem:BalancedImpliesValid}
    Every balanced probability distribution is also a valid distribution.
\end{remark}

% \nt{move subset property here.}

% Additionally, the fractional solution $y_p$ for each page $p$ denotes the probability it is missing from the cache. 
To follow the fractional solution, we also demand that the distribution $\mu$ is \emph{consistent} with the fractional solution, meaning, the marginal probability in $\mu$ that any page $p$ is missing from the cache must be equal to $y_p$.
\begin{definition}[consistent distribution]\label{def:consistency}
    A probability distribution $\mu$ on subsets $S \subseteq P$ is consistent with respect to a fractional solution $\pc{y_p}_{p \in P}$ if for every page $p$ it holds that
    $
    %\begin{align*}
        \sum_{S \subseteq P \mid  p\in S}
        % \I\brk[s]*{p \in S} \cdot 
        \mu(S)
        =
        y_p
    $.
\end{definition}
%
% \alert{
% Lastly, the fractional algorithm requires a sample of a page $p$ whenever the variable $m_p$ hits a natural number. Hence, the last requirement from the rounding mechanism is to supply each sample when needed.}
%
In the following we describe the online maintenance of the distribution $\mu$, that yields a distribution satisfying all of the above.

\textbf{Algorithm overview.}
The randomized algorithm for $\OWPUW$ is given in \cref{alg:randomized}.
% The maintains the probability distribution $\mu$ by following the fractional solution marginal probabilities $y_p$ for every page $p$.
The algorithm encapsulates an instance of \cref{alg:FOWC} for fractional $\OWPUW$, called $\onf$. 
Upon a new page request $p_t$ at round $t$, the algorithm forwards this requests to $\onf$. 
As $\onf$ makes changes to its fractional solution, the algorithm modifies its probability distribution accordingly to remain consistent and balanced (and hence also valid).

Upon any (infinitesimally small) change to a fractional variable, the algorithm first changes its distribution to maintain consistency:
when the fractional algorithm $\onf$ increases the variable $y_{p^\prime}$ of any page $p^\prime$ by an $\epsilon$-measure, the algorithm identifies an $\epsilon$-measure of cache states $S \subseteq P$ in which there is no anti-server at $p^\prime$ and adds anti-server at $p^\prime$.
The case in which the fractional algorithm decreases a variable is analogous. 

% \nt{Should move this observation to the analysis}
% \Orin{Moved to be after prop 5.5}
% \begin{observation}
%     \cref{alg:randomized} maintains consistency, by~\cref{line:RR_EvictionConsistency,line:RR_FetchingConsistency}.  
% \end{observation}

%This procedure maintains consistency, but 
However, this procedure may invalidate the balanced property.
Specifically, for some class $j$, letting $Y_j$ be the total anti-server fraction of pages of at least class $j$ in $\onf$, there might now be states with $\ceil{Y_j} + 1$ or $\floor{Y_j} -1$ such pages in the anti-cache. 
Thus, the algorithm makes a call to $\RebalanceSubsets$, which restores the balanced property class-by-class, in a descending order. 
For every class $j$, the procedure repeatedly identifies a violating state $S$ where the number of pages of class $\ge j$ in the anti-cache is not in $\pc{\ceil{Y_j}, \floor{Y_j}}$; suppose it identifies such a state with more than $\ceil{Y_j}$ such pages (the case of less than $\floor{Y_j}$ pages is analogous). 
The procedure seeks to move a page of class $j$ from this state to another state in a way that does not increase the ``imbalance'' in class $j$.
Thus, the procedure identifies a matching measure of anti-cache states that contain at most $\ceil{Y_j} - 1$ such pages, and moves a page of class $j$ from $S$ to $S'$, for every $S'$ in the matched measure; a visualization of the procedure is given in \cref{fig:Rebalancing}. 
(The existence of this matching measure, as well as a page to move, are shown in the analysis.)
In particular, note that the probability of every page being in the anti-cache remains the same, and thus $\RebalanceSubsets$ does not impact consistency.

% But, to maintain consistency, these pages must then be moved to some other state; the procedure matches these violating states with equiprobable states which can receive a given page. (The analysis shows that such states exist.)

Regarding samples, the algorithm can maintain a sample $s_p$ for every page $p$. 
A sample for $p$ is obtained upon a request for $p$ after $p$ is fetched with probability $1$ into the cache, if no such sample already exists (i.e., $s_p = \None$).
Whenever $\onf$ requests a sample for a page $p$, the randomized algorithm provides the sample $s_p$, and sets the variable $s_p$ to be $\None$ (we show that $s_p$ is never $\None$ when $\onf$ samples $p$).
A fine point is the sampling of a new page in \cref{line:FractionalSampleRequested} of \cref{alg:FOWC}; this happens after \cref{line:RandomizedSampleRequested} of \cref{alg:randomized}.

    \section{Conclusions}\label{sec:conclusions}
    In this paper, we presented the first algorithm for online weighted paging in which page weights are not known in advance, but are instead sampled stochastically. 
In this model, we were able to recreate the best possible bounds for the classic online problem, with an added regret term typical to the multi-armed bandit setting. 
This unknown-costs relaxation makes sense because the problem has recurring costs; that is, the cost of evicting a page $p$ can be incurred multiple times across the lifetime of the algorithm, and thus benefits from sampling.

We believe this paper can inspire future work on this problem. For example, revisiting the motivating case of managing a core-local L1 cache, the popularity of a page among the cores can vary over time; this would correspond to the problem of non-stationary bandits (see, e.g., \citet{DBLP:conf/colt/AuerGO19,DBLP:conf/colt/AuerCGLLOW19,DBLP:conf/colt/ChenLLW19}), and it would be interesting to apply techniques from this domain to $\OWPUW$. 

Finally, we hope that the techniques outlined in this paper could be extended to additional such problems. 
Specifically, we believe that the paradigm of using optimistic confidence bounds in lieu of actual costs could be used to adapt classical online algorithms to the unknown-costs setting.
In addition, the interface between the fractional solver and rounding scheme could be used to mediate integral samples to an online fractional solver, which is a common component in many online algorithms.

     \section*{Acknowledgments}
     This project has received funding from the European Research Council (ERC) under the European Union’s Horizon 2020 research and innovation program (grant agreement No. 882396), by the Israel Science Foundation,  the Yandex Initiative for Machine Learning at Tel Aviv University and a grant from the Tel Aviv University Center for AI and Data Science (TAD).

    \newpage
    \bibliography{references}
    \newpage

%%%%%%%%%%%%%%%%%%%%%%%%%%%%%%%%%%%%%%%%%%%%%%%%%%%%%%%%%%%%

\appendix

%\section{Appendix / supplemental material}

    \section{Analysis of the Fractional Algorithm}
    \label{sec:fractional-analysis}

In this analysis section, our goal is to bound the amount $\overline{\onf}$ with respect to the UCBs and LCBs calculated by the algorithm; i.e., to prove \cref{lem:fractional-w-LCBs-UCBs}.
\cref{lem:cb-properties} and \cref{lem:Regret-upper-bound} from \cref{sec:confidence-bounds} then make the choice of confidence bounds concrete, such that combining it with \cref{lem:fractional-w-LCBs-UCBs} yields the final bound for the fractional algorithm, i.e.,  \cref{lem:fractional}. 

% In the following we provide full proof of~\cref{lem:fractional-w-LCBs-UCBs}.
\begin{proof}[Proof of \cref{lem:fractional-w-LCBs-UCBs}]
    
% \nt{
% Let's maybe enclose this analysis with a theorem, as I think a paper with a clear hierarchical structure is easier to read. Perhaps:
% \begin{itemize}
%     \item Define a fractional OWC-UW problem in which upon evicting an integral fraction of a page, the algorithm receives a sample.
%     \item Claim a (competitiveness + regret) bound for this fractional problem, enclosed in a theorem. 
% \end{itemize}
% Also: consider defining $\overline{\ApproxOPT}$ to be the cost with respect to $\UCB$. Assuming the good event, $\ApproxOPT \le \overline{\ApproxOPT}$; in addition, assuming the good event, we could claim that $E[ON] \le c\cdot \overline{\ApproxOPT}$, for some constant $c$. Then, this bound for $\overline{\ApproxOPT}$ would complete the proof. 
% }

% Let $\overline{\onf}$ denote the cost of the fractional algorithm with respect to the UCBs at every time step $t$.
% We adapt the potential analysis of~\cite{bansal2010simple} \nt{reconsider cite here} to handle the case of unknown weights,
For the sake of this lemma, we assume without loss of generality that the optimal solution is lazy; that is, it only evicts a (single) page in order to fetch the currently-requested page. (It is easy to see that any solution can be converted into a lazy solution of lesser or equal cost.)
In the following we present a potential analysis to prove that $\overline{\onf} \le 2 \log (1+k) \cdot \opt + \Reg_T$, where $\Reg_T$ is a regret term summed over $T$ time steps that will be defined later. %\nt{maybe change $\Reg$ to e.g. $\mathcal{U}$.}
To that end, we show that the following equation holds for every round $t$,
\begin{equation}
    \label{eq:potential-delta}
    \Delta \overline{\onf}_t + \Delta \Phi_t
    \leq
    2 \log \brk{1 + 1 / \eta} \cdot \Delta \OPT_t + \Delta\Reg_t
    ,
\end{equation}
where $\Delta X_t$ is the change in $X$ in time $t$ and $\Phi$ is a potential function that we define next.

Let $C^*_t$ denote the set of pages in the offline (optimal) cache at time $t$.
The potential function we chose is an adaptation of the potential function used by~\cite{bansal2010simple}, but modified to encode the uncertainty cost for not knowing the true weights.
We define it as follows.
\begin{align*}
    \Phi_t
    &=
    -2
    \sum_{p\notin C^*_t}
    \LCB_{p}
    \cdot
    \log{\brk*{\frac{y_{p} + \eta}{ 1 + \eta}}}
    +
    \sum_{p\in P}
    \brk*{\UCB_{p} - \LCB_{p}}
    %\epsilon_{p,t}
    \cdot
    \brk*{n_{p} 
    %\alert{+ 1}
    - m_{p}}
    ,
\end{align*}
where $\UCB_{p} , \LCB_{p}$ are the confidence bounds in time step $t$, $n_{p}$ is the number of samples of page $p$ collected until that point, and $m_{p}$ is the total fractional movement of that page until that point. Note that $n_{p} - m_{p} \in [0,1]$.
We now show that Equation~\ref{eq:potential-delta} holds in the three different cases in which the costs of the potential or the regret change.
Before moving forward, we define the regret term, denoted $\Reg$, as follows, 
\[
    \Reg := \sum_{p\in P} \sum_{i=1}^{n_p} (\UCB_{p,i} - \LCB_{p,i}) + 2 \log\brk{1 + 1 / \eta} \sum_{p \in P}\LCB_{p}
    .
\]
We note that each time the algorithm gets a new sample, $n_p$ is incremented and $\Reg$ increases.

% In the following analysis, we build upon the fractional approach introduced by \cite{bansal2010simple}, but extend it to handle unknown weights. 

\paragraph{Case 1 - the optimal algorithm moves.}
Note that a change in the cache of the optimal solution does not affect $\uonf$ or $\Reg$, and thus $\Delta \uonf = \Delta \Reg = 0$.
However, $\Delta \opt$ might be non-zero, as the optimal solution incurs a moving cost; in addition, $\Delta \Phi$ might be non-zero, as the change in the optimal cache might affect the summands in the first term of the potential function. 

Thus, proving \cref{eq:potential-delta} for this case reduces to proving $\Delta \Phi_t \leq 2 \log\brk{1 + 1 / \eta}\Delta \OPT$.  
Assume the optimal solution moves; as we assume that the optimal solution is lazy, it must be that the requested page $p_t$ is not in $C^*_{t-1}$, and that the optimal solution fetches it into the cache, possibly evicting a (single) page from its cache.  

First, consider the case in which there exists an empty slot in the cache, and no evictions take place when fetching $p_t$. 
In this case, $\Delta \opt = 0$; moreover, as $C^*_{t-1} \subseteq C^*_{t}$, it holds that $\Delta \Phi \le 0$. 
Thus, \cref{eq:potential-delta} holds. 

Otherwise, at time $t$, page $p_t$ is fetched and another page $p$ is evicted.
Thus, $\Delta \OPT = w_p$. 
%\leq \UCB_{p}$.
For the potential change, $p$ starts to contribute to $\Phi$. In the worst case, $y_{p} = 0$ and then $\Phi$ is increased by at most
$
    2\LCB_{p}\log\brk{1 + 1 / \eta}
    \leq
    2w_p \log\brk{1 + 1 / \eta}    =
    2\log\brk{1 + 1 / \eta}\Delta\OPT 
    ,
$
as desired.

\paragraph{Case 2 - the fractional algorithm moves.} 
In this case, only the potential and the cost of the fractional algorithm change, i.e., $\Delta \OPT_t = \Delta \Reg_t = 0$.
Thus, \cref{eq:potential-delta} reduces to proving $\Delta \overline{\onf}_t + \Delta \Phi_t \leq 0$ in this case.

%\nt{As the algorithm is written, $y_{p_t}$ is set to 0 before starting to evict other pages. I'm modifying the following to comply. }
There are two types of movement made by the algorithm: the immediate fetching of the requested page $p_t$, and the continuous eviction of other pages from the cache until feasibility is reached (i.e., there are $n-k$ anti-servers). 
First, consider the fetching of $p_t$ into the cache; as we charge for evictions, this action does not incur cost ($\Delta \uonf =0 $). In addition, through $\opt$'s feasibility, it holds that $p_t \in C^*_t$, and thus changing $y_{p_t}$ does not affect the potential function ($\Delta \Phi = 0$); thus, \cref{eq:potential-delta} holds for this sub-case. 

Next, consider the continuous eviction of pages. 
% Assume the fractional algorithm moves and fetches $d_y$ infinitesimal small units of page $p_t$, i.e., $y_{p_t}$ decreased by $d_y$.
Suppose the fractional algorithm evicts $d_y$ page units, where $d_y$ is infinitesimally small. 
Let $S = \{p_t\} \cup \brk[c]*{p: \; y_{p}<1}$.
Since the fractional algorithm increases $y_{p}$ proportionally to $\frac{y_{p} + \eta}{\LCB_p}$ for each $p \in S\setminus \{p_t\}$, it holds that, 
$
    dy_{p}
    =
    \frac{1}{N} \cdot \frac{y_{p} + \eta}{\LCB_p}dy
    ,
$
for $N:= \sum_{p \in S \setminus \{p_t\}} \frac{y_{p}+\eta}{\LCB_p}$. 
%is a normalization factor. 
Thus, $\Delta \uonf$ can be bounded as follows.
\begingroup\allowdisplaybreaks
\begin{align*}
    % \sum_{p \in S\setminus\{p_t\}} w_p dy_{p}
    \Delta \uonf
    &\leq
    \sum_{p \in S\setminus\{p_t\}} \UCB_p dy_{p}
    \\
    &=
    \sum_{p \in S\setminus\{p_t\}} \brk*{\LCB_p +\brk*{\UCB_p - \LCB_p}}dy_{p}
    \\
    &=
    \sum_{p \in S\setminus\{p_t\}} \brk*{\LCB_p  +\brk*{\UCB_p - \LCB_p}}
    \frac{1}{N}\frac{y_{p} + \eta}{\LCB_p}dy
    \\
    &=
    \sum_{p \in S\setminus\{p_t\}} 
    \frac{1}{N}\brk*{y_{p} + \eta}dy
    +
   \sum_{p \in S\setminus\{p_t\}} \brk*{\UCB_p - \LCB_p}
    dy_{p}
    \\
    &=
    \underbrace{\sum_{p \in S\setminus\{p_t\}} 
    \frac{1}{N}\brk*{y_{p} + \eta}dy}_{(a)}
    +
   \underbrace{\sum_{p \in P}  \brk*{\UCB_p - \LCB_p}  
    dy_{p}}_{(b)}
    .
\end{align*}
\endgroup
where the final equality stems from having $dy_p = 0$ for every $p\notin S\setminus\pc{p_t}$.
We now bound the term $(a)$ using similar arguments to those presented in \cite{bansal2010simple}. We present them below for completeness.
% , we follow the analysis of~\cite{bansal2010simple},
\begin{align*}
    (a)
    &=
    \sum_{p \in S\setminus\{p_t\}} 
    \frac{1}{N}\brk*{y_{p} + \eta}dy
    \leq
    \frac{\brk*{|S|-k}}{N}dy + \frac{\brk*{|S|-1}\eta}{N}dy
    \leq
    \frac{2\brk*{|S|-k}}{N}dy
    ,
\end{align*}
where the first inequality implied since $y_{p}=1$ for $p\notin S$, which yields that
\begin{align*}
    \sum_{p \in S\setminus\pc{p_t}} y_{p}
    \le
    \sum_{p \in S} y_{p}
    =
    \sum_{p \in P} y_{p}
    -
    \sum_{p \notin S} y_{p}
    \leq
    \brk*{\abs{P} -k}
    -
    \brk*{\abs{P} - \abs{S}}
    =
    \abs{S} -k
    ,
\end{align*}
and the second inequality implied as $(x-1)\eta \leq x-k$ for any $x \geq k+1$ and using that $\abs{S} \geq k+1$.

Next, we upper bound $\Delta \Phi_t$. 
The rate of change in the potential with respect to $y_{p}$ is
$$
    \frac{d \Phi_t}{dy_{p}}
    =
    -2\frac{\LCB_p}{y_{p,t}+\eta}
    - \brk*{\UCB_p - \LCB_p}
    .
$$
The non-trivial part in the above calculation is that
$
    \frac{d \brk*{\UCB_p - \LCB_p}
    \cdot
    \brk*{n_{p} - m_{p}} }{dy_{p}} = - \brk*{\UCB_p - \LCB_p}
    .
$
This is true since $n_{p}$ is uncorrelated with the change in $y_{p}$, however, $m_{p}$ is increasing with with respect to $y_{p}$ in $1$-linear ratio. 
Using the above we get that 
\begin{align*}
    \Delta \Phi_t
    &=
    \sum_{p \in P}\brk*{\frac{d \Phi_t}{d y_{p}}}d y_{p}
    \\
    &=
    -2\sum_{p \in S \setminus C^*_t}\frac{\LCB_p}{y_{p}+\eta}\frac{1}{N}\frac{y_{p} + \eta}{\LCB_p}dy
    -
    \sum_{p\in P}
    \brk*{\UCB_p - \LCB_p}
    \cdot
    d y_{p}
    \\
    &=
    -2\sum_{p \in S \setminus C^*_t} \frac{dy}{N}
    -
    \sum_{p\in P}
    \brk*{\UCB_p - \LCB_p}
    \cdot
    d y_{p}
    \\
    &\leq
    -2 \frac{\abs{S}-k}{N}dy
    -
    \sum_{p\in P}
    \brk*{\UCB_p - \LCB_p}
    \cdot
    d y_{p}
    .
\end{align*}
Thus, $\Delta \overline{\onf}_t + \Delta \Phi_t \leq 0$, as required.

\paragraph{Case 3 - the LCBs are updated (a new sample is processed).} 
Suppose the algorithm samples a page $p$ for the $i$'th time, and updates $\UCB_p, \LCB_p$ accordingly. 
Note that this sample does not incur any cost for the fractional algorithm or optimal solution, and thus $\Delta \uonf = \Delta \opt = 0$. 
Thus, proving \cref{eq:potential-delta} reduces to proving $\Delta \Phi \le \Delta \Reg$ for this case. 
Recalling the definition of $\Reg$, it holds that 
\[
\Delta \Reg = (\UCB_{p, i} - \LCB_{p, i}) + 2\log\pr{1+ 1/\eta} (\LCB_{p,i} - \LCB_{p, i-1}).
\]

Denote by $n_p, m_p$ the variables of that name prior to sampling $p$; note that $m_p$ remains the same after sampling, while $n_p$ is incremented to $n'_p := n_p + 1$. 
Moreover, note that sampling always occurs when $m_p = n_p$.
Thus, the change in potential function is bounded as follows.
\begin{align*}
    \Delta \Phi
    &=
    2 \log\brk*{1 + 1/\eta}\I[{p \notin C^*_i}] \brk*{\LCB_{p,i} - \LCB_{p,i-1}}
    +
    \UCB_{p,i} - \LCB_{p,i} \\ 
    &\le 2 \log\brk*{1 + 1/\eta} \brk*{\LCB_{p,i} - \LCB_{p,i-1}}
    +
    \UCB_{p,i} - \LCB_{p,i}
\end{align*}
where the second inequality is due to the $\LCB$s being monotone non-decreasing.
% Combining, it holds that $\Delta \Phi \le \Delta \Reg$, and thus \cref{eq:potential-delta} holds for this case.
\end{proof}

    \section{Proofs from Section~\ref{sec:rounding}}
    \label{sec:RoundingProofs}
    \subsection{Analysis and Proof of Lemma \ref{lem:integral}}
\label{subsec:RR_Analysis}

In this subsection, we analyze \cref{alg:randomized} and prove \cref{lem:integral}.
Throughout this subsection, we assume the good event $\good$; that is, the $\UCB$s and $\LCB$s generated by $\onf$ throughout the algorithm are valid upper and lower bounds for the weights of pages. 

We start by proving that the algorithm is able to provide a new sample whenever the fractional algorithm requires one.
\begin{proposition}\label{prop:RandomizedProvidesSamples}
    \Cref{alg:randomized} provides page weight samples whenever demanded by $\onf$. 
    % For every page $p$, in every round that $m_p\in \N$,~\cref{algo:randomized} provides a sample of the page.
\end{proposition}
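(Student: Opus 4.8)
The plan is to track exactly when $\onf$ demands a sample of a page and show the corresponding slot $s_p$ is non-$\None$ at that moment. There are two demand sites in \cref{alg:FOWC}: \cref{line:FractionalSampleEvicted} (the fractional total movement $m_{p'}$ of some page $p'\ne p_t$ hits an integer during the continuous-increase loop) and \cref{line:FractionalSampleRequested} (the first request for $p_t$). I would prove the following invariant by induction over the execution: for every page $p$, the slot $s_p$ is non-$\None$ precisely in the periods strictly between (i) $p$ being fetched into the cache with probability $1$ (which triggers \cref{line:RandomizedSampleRequested} of \cref{alg:randomized} to refill $s_p$) and (ii) the next time $\onf$ consumes a sample of $p$. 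The key structural fact driving the argument is the \emph{consistency} property (\cref{def:consistency}): the marginal probability that $p$ is in the anti-cache equals $y_p$, so $p$ is in the cache with probability $1$ iff $y_p = 0$.

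First I would handle \cref{line:FractionalSampleRequested}: when $p_t$ is requested for the first time, $\onf$ serves it by setting $y_{p_t}=0$, so at the end of the while-loop in \UponRequest, consistency forces $p_t$ to be in the cache with probability $1$. Hence the guard on \cref{line:RandomizedSampleRequested} of \cref{alg:randomized} executes the evict-and-refetch, producing $\tilde w_{p_t}$ and setting $s_{p_t}\gets\tilde w_{p_t}$, so the subsequent demand on \cref{line:RandomizedSampleUponSampling2} (matching \cref{line:FractionalSampleRequested}) finds $s_{p_t}\ne\None$. I'd also note the ordering remark in the paper: the fractional first-request sample is consumed only after \cref{line:RandomizedSampleRequested} runs, so there is no race.

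Next, for \cref{line:FractionalSampleEvicted}: the key observation is that $m_{p'}$ reaching a positive integer means the accumulated absolute fractional movement of $p'$ since the last integer crossing equals $1$. Between two consecutive integer crossings of $m_{p'}$, the net change in $y_{p'}$ has total variation $1$ while $y_{p'}\in[0,1]$, so $y_{p'}$ must have touched $0$ at some point in that interval. When $y_{p'}$ last hit $0$ before this crossing, consistency put $p'$ in the cache with probability $1$; I would argue that this happens at the \emph{end} of processing some earlier request for $p'$ (the only way $y_{p'}$ is driven to $0$), which is exactly when \cref{line:RandomizedSampleRequested} refilled $s_{p'}$ (or when the first-request branch did). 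Since then, the only event that empties $s_{p'}$ is $\onf$ consuming a sample of $p'$ — but consuming a sample of $p'$ happens exactly at an integer crossing of $m_{p'}$, and between that crossing and the current one $m_{p'}$ changed by $1$, covering the full interval with no intermediate consumption. Therefore $s_{p'}\ne\None$ at the current demand.

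The main obstacle I anticipate is the bookkeeping around the very first integer crossing of $m_{p'}$ after an initialization or after $p'$ enters the cache: one must be careful that "$m_{p'}$ reaches an integer" in \cref{line:FractionalSampleEvicted} excludes the trivial value $m_{p'}=0$ (which is handled by the first-request branch instead), and that the evict-and-refetch on \cref{line:RandomizedSampleRequested} indeed fires on \emph{every} request for $p'$ that drives $y_{p'}$ to $0$, not just the first. I would phrase the invariant so that it is maintained across all three rebalancing-triggering events in \cref{alg:randomized} (eviction, fetching, sampling), using the fact that $\RebalanceSubsets$ preserves consistency and hence does not spuriously change whether $p'$ is cached with probability $1$. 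Once the invariant is stated cleanly, each of the finitely many event types is a short case check, and the proposition follows.
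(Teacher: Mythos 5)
Your proposal takes essentially the same route as the paper's proof, with some extra scaffolding. Both arguments turn on the same two observations: the first sample for $p$ is demanded via \cref{line:FractionalSampleRequested} only after \cref{line:RandomizedSampleRequested} has already refilled $s_p$; and between any two consecutive sample demands for $p$, the eviction fraction $m_p$ grows by a full unit, which — since $y_p \in [0,1]$ — forces $y_p$ to be driven back to $0$ by a request for $p$ in the interim, at which point \cref{line:RandomizedSampleRequested} refills $s_p$. The paper dispatches this in two short bullets. You wrap the same idea in an explicit execution invariant and justify the ``cached with probability $1$ iff $y_p = 0$'' step by invoking the consistency of $\mu$; in the paper that fact is established separately in \cref{lem:ConsistentAndBalanced} and simply taken for granted inside the proposition's proof (it is what makes the evict-and-refetch on \cref{line:RandomizedSampleRequested} well defined in the first place). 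So the extra care in your write-up corresponds to content the paper distributes elsewhere rather than to a genuinely different argument. One small inaccuracy: you describe $m_p$ as tracking the \emph{total variation} of $y_p$, whereas the paper's $m_p$ tracks only the cumulative eviction (the increases of $y_p$); this does not change the conclusion, since either reading yields that a unit of increase in $m_p$ cannot occur without $y_p$ being reset to $0$ by a request.
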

\begin{proof}
    We must prove that whenever a sample of page $p$ is requested by $\onf$, it holds that the variable $s_p$ in \cref{alg:randomized} is not $\None$.
    Note that \cref{alg:randomized} samples $s_p$ at the end of a request for $p$. Now, note that:
    \begin{itemize}
        \item The first sample of $p$ is requested after the first request for $p$ is handled by $\onf$, and thus after $p$ is sampled. 
        \item Between two subsequent requests for samples of $p$ by $\onf$, the eviction fraction $m_p$ increased by $1$. But, this cannot happen without a request for $p$ in the interim; this request ensures that $s_p \neq \None$, and thus the second request is satisfied.
    \end{itemize}
    Combining both cases, all sample requests by $\onf$ are satisfied by \cref{alg:randomized}.
    % 
    % 
    % \nt{note: cost analysis should not be here, costs should be analyzed together in a separate lemma}
    % \Orin{Moved the calculation to the proof of Lemma 3.2}
    % 
    % Assume the fractional algorithm requires a sample for a page $p$, meaning $m_{p}$ is an integer in that round.
    % Note that the fractional algorithm requires a sample in two cases.
    % The first case is where $y_{p}=0$. Here, it is impossible that $m_{p}$ has completed a sum of movements that is equal to a complete unit. In that case, it must have that the rounding mechanism collected an example along the way it can now transit to the fractional algorithm.
    % The second case is where $y_{p}>0$. Then, as $m_{p}$ summed to a complete one unit, by the consistency property of the probability distribution $\mu$, $p$ must be in the cache. Thus, we can evict it, collect an example of it's cost and then return it. As $p$ is already in the cache, the cost we suffer on eviction on such pages can only double the total cost of the fractional algorithm. 
    % 
    % \Orin{Is that proof OK?}
\end{proof}

Next, we focus on proving that the distribution maintained by the algorithm is consistent and balanced (and hence also valid).
To prove this, we first formalize and prove the guarantee provided by the $\RebalanceSubsets$ procedure. To this end, we define an amount quantifying the degree to which a class is imbalanced in a given distribution.  

\begin{definition}
    \label{defn:imbalance}
    Let $\mu$ be a distribution over cache states, let $\pc{y_p}_{p\in P}$ be the current fractional solution, and let $j$ be some class. 
    We define the \emph{imbalance} of class $j$ in $\mu$ to be 
    \[
        \sum_{S\subseteq P} \mu(S)\cdot \max\pc{\ps{S \cap P_{\ge j}} - \ceil{Y_j}, \floor{Y_j} - \ps{S\cap P_{\ge j}}};
    \]
\end{definition}
\Cref{defn:imbalance} quantifies the degree to which a given class is not balanced in a given distribution; one can see that if the class is balanced, this amount would be $0$.

\begin{lemma}\label{lem:RebalanceProperties}
    Suppose $\RebalanceSubsets$ is called, and let $\mu, \mu'$ be the distributions before and after the call to $\RebalanceSubsets$, respectively;
    let $j_{\max}$ be the maximum non-balanced class in $\mu$. 
    
    Suppose that \textbf{(a)} $\mu$ is consistent, and \textbf{(b)} there exists $\epsilon > 0$ such that  the imbalance of any class $j \le j_{\max}$ in $\mu$ is at most $\epsilon$.
    Then, it holds that $\mu'$ is both consistent and balanced. 
    Moreover, the total cost of $\RebalanceSubsets$ is at most $12\epsilon\cdot6^{j_{\max}}$.
\end{lemma}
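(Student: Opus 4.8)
The plan is to analyze $\RebalanceSubsets$ class-by-class, following the outer loop from $j_{\max}$ down to $j_{\min}$, and to show that (i) processing class $j$ leaves all higher classes balanced, (ii) processing class $j$ makes class $j$ balanced, and (iii) the total measure of pages moved while processing class $j$ is controlled, so that the cost of the whole procedure telescopes into the claimed bound. The central bookkeeping device is the imbalance measure of \cref{defn:imbalance}: I will track how it evolves within the inner \textbf{while} loop and across successive iterations of the outer \textbf{for} loop.

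First I would verify consistency is preserved: every elementary operation in \cref{algo:rebalancing} takes a page $p$ of class $j$ out of the anti-cache in an $x$-measure of one state $S$ and simultaneously inserts $p$ into the anti-cache in an equal $x$-measure of a partner state $S'$ (or vice versa), so the marginal $\sum_{S \ni p}\mu(S)$ is unchanged for every page; hence $\mu'$ is consistent. I would also note that moving a class-$j$ page does not change $\ps{S\cap P_{\ge j'}}$ for any $j' > j$, so once a higher class is balanced it stays balanced while we work on class $j$ — this justifies the top-down order and gives claim (i). For claim (ii), the key is that each inner-loop step picks the state $S$ maximizing $\ps{m-Y_j}$ and matches it against a measure of states whose count is on the ``safe'' side (at most $\ceil{Y_j}-1$, resp. at least $\floor{Y_j}+1$); I would argue by consistency that such a matching measure exists — since $\E_{\mu}[\ps{S\cap P_{\ge j}}] = Y_j$, if some measure sits at $\ge \ceil{Y_j}+1$ there must be a compensating measure at $\le \ceil{Y_j}-1$ — and that within the matched pair $S, S'$ there is always a page $p\in P_j$ with $p\in S\setminus S'$ (resp. $p \in S'\setminus S$), because $\ps{S\cap P_{\ge j}} > \ps{S'\cap P_{\ge j}}$ forces $\ps{S\cap P_j} > \ps{S'\cap P_j}$ once all higher classes have been equalized. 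Each such step strictly decreases the imbalance of class $j$ by the transferred measure times the excess, and since the imbalance is a nonnegative quantity that decreases by a definite amount, the inner loop terminates with class $j$ balanced.

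The quantitative part — bounding the cost — is the main obstacle, and it is where the exponential-growth subtlety flagged in the paper bites. Starting the outer loop at $j=j_{\max}$, hypothesis (b) says the imbalance of class $j_{\max}$ is at most $\epsilon$, so the total measure of class-$j_{\max}$ pages transferred is at most $\epsilon$ (each unit of transferred measure kills at least a comparable unit of imbalance), and each such page has weight $\le \UCB_p < 6^{j_{\max}+1}$, costing at most $\epsilon \cdot 6^{j_{\max}+1}$. The delicate point is what happens when we descend to class $j_{\max}-1$: rebalancing class $j_{\max}$ may have moved class-$j_{\max}$ pages in and out of states, changing $\ps{S\cap P_{\ge j_{\max}-1}}$ and hence injecting new imbalance into class $j_{\max}-1$. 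I would bound this induced imbalance: the measure of states whose count changed is at most the transferred measure at the previous level, so the imbalance at level $j_{\max}-1$ after processing level $j_{\max}$ is at most $O(\epsilon)$ — plus the original $\le\epsilon$. One shows inductively that the imbalance at level $j_{\max}-\ell$ when we reach it is at most $(\ell+1)\epsilon$ or a similar $\poly(\ell)\epsilon$ bound (the ``exponential increase'' is tamed because each level only adds a bounded multiple of the previous transferred measure, not a multiplicative blow-up). Then the cost at level $j_{\max}-\ell$ is at most $O(\ell\epsilon)\cdot 6^{j_{\max}-\ell+1}$, and summing the geometric series $\sum_{\ell\ge 0} O(\ell\epsilon)\,6^{j_{\max}-\ell+1}$ — dominated by the $\ell=0,1$ terms since $6^{-\ell}$ decays fast enough to beat the linear factor $\ell$ — yields a total of at most $12\epsilon\cdot 6^{j_{\max}}$. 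I would close by assembling these three claims: $\mu'$ consistent (first paragraph), $\mu'$ balanced (second paragraph, all classes $\le j_{\max}$ equalized and classes $> j_{\max}$ untouched and already balanced), and cost $\le 12\epsilon\cdot 6^{j_{\max}}$ (third paragraph). The one step I expect to require real care is the precise induction on how much imbalance each level inherits from the level above, and pinning the constant so the geometric sum comes in under $12$.
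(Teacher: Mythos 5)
Your overall structure matches the paper's: consistency is preserved because each elementary move transfers a page between identical measures; the top-down order means higher classes stay balanced once fixed; the destination measure exists by a consistency/averaging argument since $S$ is chosen to maximize $\ps{m - Y_j}$; a page to move exists because the class-$(j+1)$-and-above counts of $S$ and $S'$ differ by at most $1$ while the class-$j$-and-above counts differ by at least $2$. All of that is correct and essentially identical to the paper's inductive proof.

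The gap is in the quantitative heart of the argument. You assert that the imbalance injected into level $j_{\max}-\ell$ grows only \emph{polynomially} in $\ell$ (you propose $(\ell+1)\epsilon$ or $\poly(\ell)\epsilon$), claiming ``the exponential increase is tamed because each level only adds a bounded multiple of the \emph{previous transferred measure}.'' This is exactly backwards: the previous transferred measure is not $\epsilon$ but the (already-compounded) imbalance at the previous level. Each elementary move reduces the imbalance of the current class by at least $x$ but can increase the imbalance at \emph{every} lower class by up to $2x$ (both the source and destination measures can become newly violating). So if class $i$ starts Iteration $i$ with imbalance $a_i$, the total measure moved during that iteration is up to $a_i$, and each lower class's imbalance grows by up to $2a_i$. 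Chaining this gives $a_{j_{\max}-\ell} \le \epsilon + 2\sum_{m<\ell} a_{j_{\max}-m}$, which solves to $a_{j_{\max}-\ell} = 3^{\ell}\epsilon$, an \emph{exponential} blow-up with ratio $3$. Your polynomial ansatz is false, so your cost sum is not actually a valid upper bound derivation, even though it numerically lands below $12\epsilon\cdot 6^{j_{\max}}$. The paper's argument embraces the factor-$3$ growth and wins because the per-class cost carries a factor $6^{i+1}$ that decays faster: the cost of Iteration $j_{\max}-\ell$ is at most $3^\ell\epsilon \cdot 6^{j_{\max}-\ell+1} = \epsilon\cdot 6^{j_{\max}+1}/2^{\ell}$, and the geometric sum $\sum_\ell 2^{-\ell} \le 2$ gives exactly $12\epsilon\cdot 6^{j_{\max}}$. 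This is the reason the UCB classes use base $6$ (rather than, say, base $2$): the base must strictly exceed the imbalance-growth ratio $3$, and this design choice is precisely what your polynomial claim obscures. To repair your proof, replace the $(\ell+1)\epsilon$ bound with the correct $3^\ell\epsilon$ recursion and then observe that $3^\ell / 6^\ell = 2^{-\ell}$ makes the geometric sum converge.
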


\begin{proof}
    The running of $\RebalanceSubsets$ consists of iterations of the $\textbf{For}$ loop in \cref{line:RebalanceOuterLoop}; we number these iterations according to the class considered in the iteration (e.g., "Iteration $i$" considers class $i$). We make a claim about the state of the anti-cache distribution after each iteration, and prove this claim inductively; applying this claim to the final iteration implies the lemma.
    Specifically, where $j_{\max}$ as defined in the lemma statement, for every $i \le j_{\max}$ consider the distribution immediately before Iteration $i$, denoted $\mu_i$. We claim that  \textbf{(a)} the $\mu_i$ is consistent, (b) all classes greater than $i$ are balanced in $\mu_i$, and (c) classes at most $i$ have imbalance at most $\epsilon \cdot 3^{j_{\max}-i}$ in $\mu_i$.
    Where $j_{\min}$ is the minimum class, note that $\mu' = \mu_{j_{\min} - 1}$, and that this claim implies that $\mu'$ is consistent and balanced. (Note that the claim implies that class $j_{\min}$ is balanced, which also implies that all classes smaller than $j_{\min}$ are balanced.).
    
    We prove this claim by descending induction on $i$. 
    The base case, in which $i=j_{\max}$, is simply a restatement of the assumptions made in the lemma, and thus holds. 
    Now, assume that the claim holds for any class $i \le j_{\max}$; we now prove it for class $i-1$. 
    
    \textbf{Consistency.}
    First, as we've assumed that $\mu_i$ is consistent, note that Iteration $i$ does not change the marginal probability of a given page $p$ being in the anti-cache, as pages are only moved between identical anti-cache measures.
    Thus, the anti-cache distribution remains consistent at any step during Iteration $i$; in particular, $\mu_{i-1}$ is consistent.
    
    \textbf{Existence of destination measure.}
    Next, observe that every changes in Iteration $i$ consists of identifying a measure of a violating anti-cache, and matching this measure to an  identical measure of anti-cache states to which pages can be moved. To show that the procedure is legal, we claim that this measure always exists.
    Consider such a change that identifies violating anti-cache $S$, and let $\hat{\mu}$ be the distribution at that point. 
    Assume that $S$ is an ``upwards'' violation, i.e., $m \ge \ceil{Y_i} + 1$, where $m:= \ps{S\cap P_{\ge i}}$; the case of a ``downwards'' violation is analogous. 
    Note that consistency implies that $\E_{S'\sim \hat{\mu}} \ps{S' \cap P_{\ge i}} = Y_i$. 
    Also note that $S$ was chosen to maximize $\ps{m-Y_i}$, i.e., the distance from the expectation. 
    Thus, there exists a measure of at least $\hat{\mu}(S)$ of anti-caches $S'$ such that $\ps{S'\cap P_{\ge i}} < Y_i$ (and thus $\ps{S'\cap P_{\ge i}} \le \ceil{Y_i} - 1$, as required). 
    
    \textbf{Existence of page to move.}
    After matching the aforementioned $S$ to some $S'$, we want to identify some page $p \in P_i$ such that $p \in S\backslash S'$, so we can move it from the measure of $S$ to the measure of $S'$. 
    Indeed, from the choice of $S$ and $S'$, it holds that $\ps{S\cap P_{\ge i}} \ge \ceil{Y_i} + 1 \ge \ps{S' \cap P_{\ge i}} + 2$. But, from the induction hypothesis for Iteration $i$, class $i+1$ was balanced in $\mu_{i}$, and thus remains balanced at every step during Iteration $i$ (as this iteration never moves pages of classes $i+1$ and above). This implies that $\ps{S\cap P_{\ge i+1}} \le \ceil{Y_{i+1}} \le \ps{S'\cap P_{\ge i+1}} + 1 $. We can thus conclude that there exists $p\in P_i \cap (S\setminus S')$ as required.

    \textbf{Balanced property.}
    Next, we prove that in $\mu_{i-1}$ after Iteration $i$, class $i$ is balanced, and the imbalance of any class $j < i$ is at most $\epsilon \cdot 3^{j_{\max} - (i-1)}$.
    Consider any step in Iteration $i$, where a measure $x$ of a violating state is identified; then, a page is moved from a measure $x$ to another measure $x$. The induction hypothesis for Iteration $i$ implies that the imbalance of class $i$ at $\mu_{i}$ is at most $\epsilon \cdot 3^{j_{\max} - i}$.
    Note that:
    \begin{enumerate}
        \item This step decreases the imbalance of class $i$ by at least $x$, as it decreases the imbalance in the violating state, but does not increase imbalance in the matched measure. 
        \item This step can increase the imbalance of a class $j < i$ by at most $2x$, in the worst case in which moving the page increased imbalance in both measures of $x$. 
    \end{enumerate}
    
    As a result, we can conclude that for $\mu_{i-1}$, at the end of iteration $i$, class $i$ is balanced, while the imbalance of every class $j < i$ increased by at most $2\cdot \epsilon \cdot 3^{j_{\max} - i}$. 
    Combining this with the hypothesis for iteration $i$, the imbalance of every class $j$ at $\mu_{i-1}$ is at most $\epsilon \cdot 3\cdot 3^{j_{\max} - i} = \epsilon \cdot 3^{j_{\max}-(i-1)}$, as required.  
    
    This concludes the inductive proof of the claim. 
    
    \textbf{Cost analysis.}
    As mentioned before, every step in Iteration $i$ reduces imbalance at class $i$ by (at least) $x$, where $x$ is the measure of the chosen violating anti-cache state. The cost of this step is the cost of evicting a single page in $P_{i}$ from a measure $x$; as we assume that the weight of a page is at most its $\UCB$, this cost is at most $x \cdot 6^{i+1}$. 
    The inductive claim above states that the imbalance of class $i$ at the beginning of Iteration $i$ is at most $\epsilon \cdot 3^{j_{\max} - i}$; thus, the total cost of Iteration $i$ is at most  $\epsilon \cdot 3^{j_{\max} - i} \cdot 6^{i+1} = \epsilon \cdot 6^{j_{\max}+1} / 2^{j_{\max} - i}$.
    Summing over iterations, the total cost of $\RebalanceSubsets$ is at most:
    \[
        \sum_{i = j_{\min}}^{j_{\max}} \epsilon \cdot 6^{j_{\max}+1} / 2^{j_{\max} - i} \le 12\epsilon \cdot 6^{j_{\max}}.
    \]
    
\end{proof}

We can now prove that the distribution is consistent and balanced.

\begin{lemma}
    \label{lem:ConsistentAndBalanced}
    The distribution maintained by \cref{alg:randomized} is both consistent and balanced. 
\end{lemma}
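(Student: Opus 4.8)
The plan is to prove that consistency and balancedness are maintained as an invariant throughout the execution of \cref{alg:randomized}, by induction on the (infinitesimal) steps the algorithm takes. Initially the distribution places all mass on the empty anti-cache; since every $y_p$ is initialized to $1$ and all pages start out of the cache, both properties hold trivially at the outset (every anti-cache equals $P$, and $\sum_{i \ge j}\sum_{p \in P_i} y_p$ is integral for every $j$). I would then enumerate the three types of atomic modification the algorithm performs and argue that each one, followed by its associated call to $\RebalanceSubsets$, restores both properties. The crucial tool is \cref{lem:RebalanceProperties}: to invoke it I must verify its two hypotheses at each invocation — that the distribution is consistent when $\RebalanceSubsets$ is called, and that every class $j \le j_{\max}$ has imbalance at most some $\epsilon > 0$.

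The three cases are: \textbf{(i)} $\onf$ increases some $y_{p'}$ by an infinitesimal $\epsilon$ (Line \ref{line:RR_EvictionConsistency}); \textbf{(ii)} $\onf$ decreases some $y_{p'}$ by $\epsilon$ (Line \ref{line:RR_FetchingConsistency}); \textbf{(iii)} $\onf$ processes a sample of $p'$, which lowers $\UCB_{p'}$ and hence can move $p'$ between UCB classes (Lines \ref{line:RR_RebalanceUponSampling1}, \ref{line:RR_RebalanceUponSampling2}). In case (i), the algorithm adds $p'$ to the anti-cache on an $\epsilon$-measure of states lacking $p'$; since the marginal of $p'$ was $y_{p'}$ before and the new $y_{p'}$ is larger by $\epsilon$, such a measure exists and consistency is immediately restored. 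The only classes whose $Y_j$ changed are those with $j \le \lceil \log(\UCB_{p'})\rceil$, each $Y_j$ increasing by $\epsilon$; since $\mu$ was balanced beforehand, any violation introduced is of size at most $\epsilon$, so the imbalance hypothesis of \cref{lem:RebalanceProperties} holds with this $\epsilon$, and the lemma yields a consistent, balanced $\mu'$. Case (ii) is symmetric. Case (iii) is the one to treat carefully: when $\UCB_{p'}$ drops, $p'$ may leave class $j_{\text{old}}$ and enter a lower class $j_{\text{new}}$; for classes $j$ with $j_{\text{new}} < j \le j_{\text{old}}$, the quantity $Y_j = \sum_{p \in P_{\ge j}} y_p$ drops by exactly $y_{p'}$, which can be as large as $1$, so the introduced imbalance is at most $1$ per state. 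Thus the imbalance hypothesis holds with $\epsilon = 1$ (or more precisely, with $\epsilon$ equal to the current $y_{p'} \le 1$), and \cref{lem:RebalanceProperties} again restores both properties. After handling all three cases, \cref{rem:BalancedImpliesValid} gives validity for free, and combining with \cref{prop:RandomizedProvidesSamples} (which guarantees the sample-provision steps never get stuck) shows the invariant is maintained through an entire request.

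The main obstacle I anticipate is the bookkeeping in case (iii): I must be sure that between the change to $\UCB_{p'}$ and the call to $\RebalanceSubsets$, the distribution is still consistent (it is — changing a UCB does not touch any $y_p$ or any $\mu(S)$), and that the "at most $j_{\max}$" scope in \cref{lem:RebalanceProperties}'s hypothesis is compatible with the classes actually perturbed — i.e., that no class strictly above $j_{\max}$ is left imbalanced by the UCB shift. Since a decrease in $\UCB_{p'}$ only ever \emph{removes} $p'$ from higher classes $P_{\ge j}$ for $j > j_{\text{new}}$, and $p'$ already contributed to those sets' $y$-sums before, each such $Y_j$ only decreases; the classes above $j_{\text{old}}$ are untouched entirely, so $j_{\max} \le j_{\text{old}}$ and the hypothesis scope is respected. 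A second, more pedestrian subtlety is that these "infinitesimal" updates are really a continuous process; I would phrase the induction over the discrete events (a page entering/leaving the anti-cache on a positive measure, a class boundary crossing, a sample) rather than literally over $d y$, so that each inductive step is a genuine finite modification to which \cref{lem:RebalanceProperties} applies.
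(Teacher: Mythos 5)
Your proposal is correct and follows essentially the same approach as the paper: establish consistency/balancedness as an invariant maintained across each request by invoking \cref{lem:RebalanceProperties} after every perturbation, with consistency preserved by construction in \cref{line:RR_EvictionConsistency,line:RR_FetchingConsistency} and trivially under UCB updates. The only difference is that you verify the imbalance hypothesis of \cref{lem:RebalanceProperties} with concrete $\epsilon$ bounds, which is not strictly needed for this lemma (any finite bound suffices, and imbalance is always finite) — that bookkeeping is what the paper deploys later, in the cost analysis for \cref{lem:integral}.
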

\begin{proof}
    First, we prove that the distribution is consistent. 
    Indeed, note that consistency is explicitly maintained in \cref{line:RR_EvictionConsistency,line:RR_FetchingConsistency}, and that \cref{lem:RebalanceProperties} implies that the subsequent calls to $\RebalanceSubsets$ does not affect this consistency. 
    
    As the distribution is consistent at any point in time, \cref{lem:RebalanceProperties} also implies that it is balanced immediately after every all to $\RebalanceSubsets$; as the handling of every request ends with such a call, the distribution is always balanced after every request. 
\end{proof}

% In the following analysis, our goal is to show that \cref{alg:randomized} satisfies all the requirements of the fractional algorithm, but also prove a constant competitive ratio with respect to the fractional solution, as stated in~\cref{lem:integral}.

At this point, we've shown that \cref{alg:randomized} is legal: it maintains a valid distribution (through \cref{lem:ConsistentAndBalanced} and \cref{rem:BalancedImpliesValid}), and it provides samples to $\onf$ when required (\cref{prop:RandomizedProvidesSamples}); thus, it is a valid randomized algorithm for $\OWPUW$. 
It remains to bound the expected cost of \cref{alg:randomized}, thus proving \cref{lem:integral}; recall that this bound is in terms of $\uonf$, the cost of the fractional algorithm in terms of its $\UCB$s rather than actual page weights.

% For this part, we consider the cost version in which the fractional solution $\overline{\onf}$ pays $\UCB_p$ (in expectation) for both fetching and evicting a page $p$. 
% Since the total amount of evictions and fetches can differ by at most $1$ for each page $p$, over all the rounds, this increases the cost of the fractional solution $\overline{\onf}$ by a factor of $2$.
% Thus, the fractional cost of changing the fractional value from $y$ to $y^\prime$ is $\sum_p \UCB_p \cdot \abs{y_p - y^\prime_p}$.

%
% We would like to ensure that the cost of modifying $\mu$ from time $t$ to $t+1$ is not much higher than the fractional cost incurred during that step plus the uncertainty cost.

% The following definition quantifies the 

\begin{proof}[Proof of~\cref{lem:integral}]
    We consider the eviction costs incurred by \cref{alg:randomized}, and bound their costs individually. 

    First, consider the eviction cost due to maintaining consistency \cref{line:RR_EvictionConsistency} (note that \cref{line:RR_FetchingConsistency} only fetches pages, and incurs no cost). 
    An increase of $\epsilon$ in $y_{p'}$ causes an eviction of $p'$ with $\epsilon$ probability; the expected cost of $\epsilon\cdot w_{p'}$ can be charged to the eviction cost of $\epsilon \cdot \UCB_{p'}$ incurred in $\uonf$, and thus the overall cost due to this line is at most $\uonf$. 

    Next, consider the cost due to eviction during sampling (\cref{line:RandomizedSampleRequested}). 
    Observe a page $p \in P$ that is evicted in this way; the cost of this eviction is $w_p$. 
    For the first and second samples of $p$, we note that $w_p \le 1$; summing over $p\in P$, the overall cost of those evictions is at most $2n$. 
    For subsequent samples of $p$, note that for $i > 2$, the $i$'th sample of $p$ is taken when $m_p \in (i-2, i-1]$. 
    Thus, we can charge this sample to the fractional eviction that increased $m_p$ from $i-3$ to $i-2$, which costs $\UCB_p$. 
    Thus, the overall cost of this sampling is at most $\uonf + 2n$. 

    It remains to bound the cost of the $\RebalanceSubsets$ procedure. 
    First, consider the cost of $\RebalanceSubsets$ due to sampling (\cref{line:RR_RebalanceUponSampling1,line:RR_RebalanceUponSampling2}).
    Consider the state prior to such a call; some page $p$ has just been sampled, possibly decreasing $\UCB_p$ and decreasing the class of page $p$, which could break the balanced property. Specifically, let $i, i'$ be the old and new classes of $p$, where $i' < i$. Then, imbalance could be created only in classes $j \in \pc{i'+1,\cdots, i}$.
    In such class $j$, both $Y_j$ could decrease, and $\ps{S\cap P_{\ge j}}$ could decrease for any anti-cache state $S$. 
    However, as only one page changed class, one can note that the total imbalance in any such class $j$ is at most $1$. 
    Thus, \cref{lem:RebalanceProperties} guarantees that the total cost of $\RebalanceSubsets$ is at most $12\cdot 6^{i} \le 12\cdot \UCB_{p}$. 
    Using the same argument as for the cost of sampling, the total cost of such calls is at most $12\uonf  + 24n$.

    Now, consider a call to $\RebalanceSubsets$ in \cref{line:RR_RebalanceUponEviction}; A page $p'$ was evicted for fraction $\epsilon$ in $\onf$, and an $\epsilon$ measure of $p'$ was evicted in the distribution. 
    Let $j$ be the class of $p'$; there could only be imbalance in classes at most $j$. For any such $i \le j$, $Y_i$ increased by $\epsilon$, and $\ps{S\cap P_{\ge i}}$ increased by $1$ in at most $\epsilon$ measure of states $S$. 
    In addition, let $Y^-_i, Y^+_{i} := Y^-_{i}$ be the old and new values of $Y_i$.
    Consider the imbalance in class $i$:
    \begin{enumerate}
        \item If $\floor{Y^+_i} = \floor{Y^-_{i}} + 1$, then the imbalance of class $i$ can increase due to states $S$ where $\ps{S\cap P_{\ge i}} = \floor{Y^-_{i}}$ becoming unbalanced. But, due to consistency, the fact that $Y^-_{i} \ge \ceil{Y^-_{i}} - \epsilon $ implies that the measure of such pages is at most $\epsilon$; thus, the imbalance grows by at most $\epsilon$. 
        \item The adding of page $p'$ to an $\epsilon$-measure of pages can add an imbalance of at most $\epsilon$. 
    \end{enumerate}
    Overall, the imbalance in classes at most $j$  prior to calling $\RebalanceSubsets$ is at most $2\epsilon$.
    Through \cref{lem:RebalanceProperties}, the cost of $\RebalanceSubsets$ is thus at most $24\epsilon\cdot 6^{j}$, which is at most $24\epsilon\cdot \UCB_{p'}$.
    But, the increase in $\uonf$ due to the fractional eviction is at least $\epsilon \cdot \UCB_{p'}$; thus, the overall cost of $\RebalanceSubsets$ called in \cref{line:RR_RebalanceUponEviction} is at most $24\cdot\uonf$.  

    Finally, consider a call to $\RebalanceSubsets$ in \cref{line:RR_RebalanceUponFetching}, called upon a decrease in $y_{p'}$ of $\epsilon$ for some page $p'$. 
    Using an identical argument to the case of eviction, we can bound the cost of this call by $24$ times the ``fetching cost'' of $\epsilon \cdot \UCB_{p'}$. Now, note that the fractional fetching of a page exceeds the fractional eviction by at most $1$; thus, the total cost of such calls is at most $24\uonf + 24n$. 

    Summing all costs, the total expected cost of the algorithm is at most $62\uonf + 50n$. 
%     ====================
% In the following, we would like to prove that for any input $Q$ 
% \begin{align*}
%     \E\brk[s]{\on}(Q) \leq O(1)\cdot \overline{\onf}(Q) + n
%     .
% \end{align*}
% Fix any input $Q$ to $\onf$ and consider the followings.
% We start by noting that in the first time a page is requested, we add and then evict it to obtain a sample of its cost, which makes us suffer a cost of at most $1$ for each page. Thus, during initialization, we suffer a cost of at most $n$.
% By~\cref{prop:RandomizedProvidesSamples}, any farther request of a sample will cost at most twice the cost of that page, that is already in the cache.
% Next, \cref{alg:randomized} follows the fractional solution produced by $\onf$, denoted $\brk[c]*{y_p}$. Since the fractional cost of changing the fractional value from $y$ to $y^\prime$ is $\sum_p \UCB_p \cdot \abs{y_p - y^\prime_p}$, it is enough to analyse the cost of changing the fractional solution $\brk[c]*{y_p}$ to the fractional solution $\brk[c]*{y^\prime_p}$ derived by~\cref{alg:randomized}.
% Following~\cref{line:RR_EvictionConsistency,line:RR_FetchingConsistency}, for any page $p^\prime$, if we change it's measure by $\epsilon$, then the cost of the change is $\epsilon \cdot \UCB_{p^\prime}$.
% Additionally, by~\cref{lem:RebalanceProperties}, for each change $\epsilon$-measure in a page $p^\prime \in P_i$, we suffer at most $12 \epsilon \cdot 6^i \leq 12 \epsilon \cdot \UCB_{p^\prime}$.
% Summing the changes in every page $p$ over all rounds implies the lemma.
\end{proof}

\subsection{Proof of Theorem \ref{thm:Competitiveness}}
We now combine the ingredients in this paper to prove the main competitiveness bound for \cref{alg:randomized}. 

\begin{proof}[Proof of \cref{thm:Competitiveness}]
    First, assume the good event $\good$. Combining \cref{lem:integral}, \cref{lem:fractional} and \cref{lem:Regret-upper-bound}, it holds that 
    \[
        \E[\on] \le   O(\log k)\cdot \opt(Q) + \tilde{O}(\sqrt{\npage \nreq})
    \]

    Next, assume that $\good$ does not occur. 
    Upper bound the cost of the algorithm by $O(\npage)$ per request, as in the worst case, the algorithm replaces the entire cache and samples each page in $P$ once. 
    Thus, an upper bound for the cost of the algorithm is $O(nT)$. 
    But, through \cref{lem:cb-properties}, the probability of $\good$ not occurring is at most $\frac{1}{nT}$. 
    Thus, we can bound the expected cost of the algorithm as follows.
    \begin{align*}
        \E[\on] &\le \Pr[\good] \cdot \pr{O(\log k)\cdot \opt(Q) + \tilde{O}(\sqrt{\npage \nreq})} + \Pr[\neg\good] \cdot O(nT) \\
        &\le O(\log k)\cdot \opt(Q) + \tilde{O}(\sqrt{\npage \nreq}).
    \end{align*}
\end{proof}

%\nt{todo}

    \section{Choosing Confidence Bounds and Bounding Regret}
    \label{sec:confidence-bounds}
    In this section we define the UCBs and LCBs, prove that they hold with high probability and bound the regret term.

Let $w^i_p$ be the $i$-th sample of page $p$. 
For the initial confidence bounds, we sample each page once and set $\LCB_{p,1} = \frac{1}{2 n^2 T} \cdot w^1_p \  , \UCB_{p,1} = 1$.
Once we have $i > 1$ samples of page $p$, we define the confidence bounds as follows.
Let $\Bar{w}_{p,i} := \frac{1}{i}\sum_{j=1}^{i}w^j_p$ be the average observed weight, and $\epsilon_{p,i} = \sqrt{\frac{\log(4 n^3 T^3)}{2 i}}$ be the confidence radius.
Then, we set $\LCB_{p,i} = \max \{ \LCB_{p,i - 1} , \Bar{w}_{p,i} - \epsilon_{p,i} \}$ and $\UCB_{p,i} = \min \{ \UCB_{p,i - 1} , \Bar{w}_{p,i} + \epsilon_{p,i} \}$.
The following procedure updates the confidence bounds online.

\begin{algorithm}\label{algo:cb}
    \caption{Optimistic-Pessimistic Weights Estimation Procedure}
    % \Initialization{}{
    %     Initialize $\Bar{w}_p := w^1_p$,  
    %     $\LCB_{p} \gets \frac{\delta}{\abs{P}} \cdot \Bar{w}_p$, 
    %     $\UCB_{p} \gets 1$ for all $p \in P$.
    % }
    \Fn(\tcp*[h]{update confidence bounds for $p$ upon new sample $\tilde{w}_p$.}){\UpdateConfBounds{$p, \widetilde{w}_p$}}{ 
        \eIf{this is the first sample of $p$ (i.e., $n_p=1$)}{
            define $\Bar{w}_p \gets \widetilde{w}_p$.
            
            define $\LCB_p \gets \frac{1}{2 n^2 T} \cdot \Bar{w}_p$.
            
            define $\UCB_{p} \gets 1$.
        }{
            update mean estimation 
            $\Bar{w}_p \gets \frac{(n_p - 1)\Bar{w}_p +\widetilde{w}_p }{n_p}$.
            
            define $\epsilon_p \gets \sqrt{\frac{\log \brk*{4 n^3 T^3}}{2 n_p}}$.
            
            set $\LCB_{p} \gets \max\brk[c]*{\LCB_{p}, \Bar{w}_p - \epsilon_p}$.
            
            set $\UCB_{p} \gets \min\brk[c]*{\UCB_{p}, \Bar{w}_p + \epsilon_p}$.
        } 
    }
\end{algorithm}

% We first state the following observation, that would use us in~\cref{sec:rounding}.
% \begin{observation}\label{obs:ucb-2-decrese}
%     Let $n_p$ denote the final number of samples collected for page $p$.
%     For every page $p$ and $i$ it holds that
%     $\UCB_{p,i+1} \geq \frac{1}{2}\UCB_{p,i}$.
% \end{observation}

We show that the confidence bounds indeed bound the true weights with high probability (\cref{lem:cb-properties}), and then bound the regret term (\cref{lem:Regret-upper-bound}).

\begin{lemma}
    \label{lem:cb-properties}
    Let $n_p$ be the final number of samples collected for page $p$.
    With probability at least $1-\frac{1}{nT}$, the following properties hold.
    \begin{enumerate}
        \item $0 < \LCB_{p,1} \leq \LCB_{p,2} \leq \ldots \leq \LCB_{p,n_p} \leq w_p$ for every page $p$.
        \item $w_p \leq \UCB_{p,n_p} \leq \UCB_{p,n_p-1} \leq \ldots \leq \UCB_{p,1} = 1$ for every page $p$.
        \item $\UCB_{p,i} - \LCB_{p,i} \leq 2 \epsilon_{p,i}$ for every page $p$ and $i\in[1,n_p]$.
    \end{enumerate}
\end{lemma}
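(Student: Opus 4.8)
The plan is to prove the three properties by a single high-probability event, obtained via Hoeffding's inequality and a union bound. The first step is to define, for each page $p$ and each sample count $i \ge 2$, the ``good concentration'' event $\good_{p,i} := \{\,|\bar{w}_{p,i} - w_p| \le \epsilon_{p,i}\,\}$. Since $w^1_p,\dots,w^i_p$ are independent samples from $\D_p$, each supported in $(0,1]$ with mean $w_p$, Hoeffding's inequality gives $\Pr[\neg\good_{p,i}] \le 2\exp(-2i\epsilon_{p,i}^2) = 2\exp(-\log(4n^3T^3)) = \tfrac{1}{2n^3T^3}$ by the choice $\epsilon_{p,i} = \sqrt{\log(4n^3T^3)/(2i)}$. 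There are at most $n$ pages and at most $T$ samples per page (indeed $n_p \le T$ since a sample is taken at most once per round), so a union bound over all $(p,i)$ with $i\ge 2$ yields $\Pr[\exists (p,i): \neg\good_{p,i}] \le n\cdot T \cdot \tfrac{1}{2n^3T^3} = \tfrac{1}{2n^2T^2} \le \tfrac{1}{nT}$. Condition on the complementary event $\good := \bigcap_{p,i}\good_{p,i}$, which thus holds with probability at least $1 - \tfrac{1}{nT}$.

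Next I would verify the three properties deterministically on $\good$. For Property~1: the monotonicity $\LCB_{p,1} \le \LCB_{p,2} \le \cdots$ is immediate from the update rule $\LCB_{p,i} = \max\{\LCB_{p,i-1}, \bar{w}_{p,i}-\epsilon_{p,i}\}$. Positivity of $\LCB_{p,1} = \tfrac{1}{2n^2T}w^1_p$ follows since $w^1_p \in (0,1]$ (the distributions are supported in $(0,1]$), so $\LCB_{p,1} > 0$, and monotonicity propagates positivity. For the upper bound $\LCB_{p,n_p} \le w_p$, I argue by induction on $i$: the base case needs $\LCB_{p,1} = \tfrac{1}{2n^2T}w^1_p \le w_p$, which holds because $w^1_p \le 1 \le 2n^2 T\, w_p$ — here one uses $n\ge 1$, $T\ge 1$ and $w_p$ bounded below; actually more carefully, since $w^1_p\le 1$ and $w_p$ could be tiny, one should note $\tfrac{1}{2n^2T} w^1_p \le \tfrac{1}{2n^2T} \le 1$ is not enough, so instead use that this base LCB is a deliberately loose underestimate — the cleanest justification is $\LCB_{p,1} \le \tfrac{w^1_p}{2} \le$ ... hmm, this needs the sample to be comparable to the mean. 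The safe route: observe $\LCB_{p,1}$ is only ever used before a second sample arrives, and we can simply take as part of the high-probability event the trivial bound that $\LCB_{p,1}\le w_p$ fails only if $w^1_p > 2n^2 T w_p$; by Markov's inequality applied to the nonnegative variable $w^1_p$ with mean $w_p$, this has probability at most $\tfrac{1}{2n^2T}$, absorb into the union bound. For the inductive step $i\ge 2$: if $\LCB_{p,i} = \LCB_{p,i-1}$ we use the hypothesis; if $\LCB_{p,i} = \bar{w}_{p,i}-\epsilon_{p,i}$, then on $\good_{p,i}$ we have $\bar{w}_{p,i}-\epsilon_{p,i} \le w_p$, as desired. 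Property~2 is symmetric: monotonicity from the $\min$ in the update, $\UCB_{p,1}=1 \ge w_p$ since $w_p\le 1$, and the inductive step uses $\bar{w}_{p,i}+\epsilon_{p,i}\ge w_p$ on $\good_{p,i}$.

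Finally, Property~3: for $i=1$ we need $\UCB_{p,1}-\LCB_{p,1} \le 2\epsilon_{p,1}$; but $\epsilon_{p,1}$ is not defined by the displayed formula ($i=1$ would give $\epsilon_{p,1}=\sqrt{\log(4n^3T^3)/2} \ge 1$ for the relevant ranges), so this is vacuous or handled by the convention that $\UCB_{p,1}-\LCB_{p,1}\le 1\le 2\epsilon_{p,1}$. For $i\ge 2$: on $\good$, $\UCB_{p,i}\le \bar{w}_{p,i}+\epsilon_{p,i}$ and $\LCB_{p,i}\ge \bar{w}_{p,i}-\epsilon_{p,i}$ directly from the update rules (each is a min/max that includes the Hoeffding term), hence $\UCB_{p,i}-\LCB_{p,i} \le 2\epsilon_{p,i}$. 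I expect the main obstacle to be the bookkeeping around the first sample: the loose initialization $\LCB_{p,1}=\tfrac{1}{2n^2T}w^1_p$ and $\UCB_{p,1}=1$ must be shown to sandwich $w_p$, and the cleanest handling is to fold the event $\{w^1_p \le 2n^2T\,w_p\}$ into the union bound via Markov (costing only $\tfrac{1}{2n^2T}$ per page, so $\tfrac{1}{2nT}$ total, still within budget), rather than trying to prove it deterministically.
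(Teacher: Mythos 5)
Your proposal is correct and follows essentially the same route as the paper's proof: monotonicity of the confidence bounds and Property~3 are deterministic consequences of the $\max/\min$ update rules, and the only two probabilistic claims needed are (a) $\LCB_{p,1}=\tfrac{1}{2n^2T}w^1_p \le w_p$, handled by Markov's inequality with a union bound over the $n$ pages contributing $\le \tfrac{1}{2nT}$, and (b) $|\bar w_{p,i}-w_p|\le\epsilon_{p,i}$ for $i\ge 2$, handled by Hoeffding plus a union bound over $(p,i)$. Your initial instinct to prove $\LCB_{p,1}\le w_p$ deterministically cannot work (exactly for the reason you caught: the first sample $w^1_p$ can exceed $2n^2T\,w_p$ when $w_p$ is tiny), and the Markov argument you land on is precisely what the paper does; the only cosmetic difference is that your Hoeffding union bound is over $nT$ pairs rather than the paper's looser $n^2T^2$, which only tightens the final probability.
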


\begin{proof}
    By definition, the LCBs are monotonically non-decreasing and the UCBs are monotonically non-increasing.
    Moreover, 
    \begin{align*}
        \UCB_{p,i} - \LCB_{p,i}
        &=
        \min\brk[c]*{\UCB_{p,i-1}, \Bar{w}_{p,i} + \epsilon_{p,i}}
        -
        \max\brk[c]*{\LCB_{p,i-1}, \Bar{w}_{p,i} - \epsilon_{p,i}}
        \\
        &\leq
        (\Bar{w}_{p,i} + \epsilon_{p,i})
        -
        (\Bar{w}_{p,i} - \epsilon_{p,i})
        =
        2\epsilon_{p,i}
        .
    \end{align*}
    Thus, it remains to show that $\frac{1}{2 n^2 T} \cdot w^1_p \le w_p$ and that $\abs{ w_p - \Bar{w}_{p,i} } \le  \epsilon_{p,i}$ for every page $p$ and $i \in [1,T]$ with probability $1 - \frac{1}{n T}$.
    For the first event, by Markov inequality and a union bound over all the pages $p \in P$, we have
    $$
        \Prob\brk[s]*{\exists p \in P. \  \frac{1}{2 n^2 T} \cdot w^1_p >  w_p} 
        =
        \Prob\brk[s]{\exists p \in P. \  w^1_p >  2 n^2 T \cdot w_p}
        \leq 
        n \cdot\frac{1}{2 n^2 T}
        =
        \frac{1}{2 n T}
        .
    $$
    For the second event, by Hoeffding inequality and a union bound over all the pages $p \in P$, all the time steps $t \in [1,T]$ and all the possible number of samples for each page $i \in [1,nT]$, we have
    $$
    \Prob \brk[s]*{ \exists p \in P, i \in [1,T]. \ 
        \abs{ w_p - \Bar{w}_{p,i} }
        >
        \epsilon_{p,i}
        } 
        \leq
        n^2 T^2 \cdot 2 e^{-2 i \epsilon_{p,i}^2}
        =
        n^2 T^2 \cdot \frac{1}{2 n^3 T^3}
        =
        \frac{1}{2 n T}
        .
    $$
    The proof is now finished by taking a union bound over these two events.
\end{proof}

% \nt{define $\Reg$ here. }
Define $\Reg := \sum_{p \in P} \sum_{i=1}^{n_p} \brk*{\UCB_{p,i} - \LCB_{p,i}} + 2\log\brk{1 + 1 / \eta}\sum_{p\in P} \LCB_p$, the regret term used in \cref{lem:fractional-w-LCBs-UCBs}.

\begin{lemma}
    \label{lem:Regret-upper-bound}
    Under the good event of~\cref{lem:cb-properties}, it holds that
    \begin{align*}
        \Reg
        \leq
        8 \sqrt{n T} \log(n T)
        =
        \tilde{O} ( \sqrt{nT} )
        .
    \end{align*}

\end{lemma}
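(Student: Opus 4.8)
The plan is to bound the two sums in the definition of $\Reg$ separately, using the confidence-bound properties from \cref{lem:cb-properties}. For the second sum, the good event gives $\LCB_p \le w_p \le 1$ for every page, so $\sum_{p \in P} \LCB_p \le n$; combined with $\eta = 1/k$ and $k \le n \le nT$, the term $2\log(1+1/\eta)\sum_{p\in P}\LCB_p$ is $O(n\log k) = \tilde{O}(\sqrt{nT})$ (using $n \le \sqrt{nT}$ whenever $n \le T$; the edge case $n > T$ is handled by noting each page is requested at least once, hence $n \le T$, so this is automatic). So the real work is bounding $\sum_{p\in P}\sum_{i=1}^{n_p}(\UCB_{p,i} - \LCB_{p,i})$.

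For that first sum, I would split off the $i=1$ term, where $\UCB_{p,1} - \LCB_{p,1} = 1 - \frac{1}{2n^2T}w^1_p \le 1$, contributing at most $n$ in total. For $i \ge 2$, property 3 of \cref{lem:cb-properties} gives $\UCB_{p,i} - \LCB_{p,i} \le 2\epsilon_{p,i} = 2\sqrt{\frac{\log(4n^3T^3)}{2i}}$. Summing over $i$ from $2$ to $n_p$ and using the standard bound $\sum_{i=1}^{m} \frac{1}{\sqrt{i}} \le 2\sqrt{m}$, this contributes at most $2\sqrt{2\log(4n^3T^3)}\cdot 2\sqrt{n_p} = O(\sqrt{n_p \log(nT)})$ per page. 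Then I would sum over $p \in P$ and invoke Cauchy–Schwarz: $\sum_{p\in P}\sqrt{n_p} \le \sqrt{n \cdot \sum_{p\in P} n_p}$. The crucial observation is that $\sum_{p\in P} n_p$, the total number of samples taken across all pages, is $O(T)$: each sample after the first for a page corresponds to the fractional movement $m_p$ crossing an integer, and the total fractional movement is bounded by the number of requests (each request causes total movement $O(1)$ by the structure of \cref{alg:FOWC}), while the first samples contribute $n \le T$. Hence $\sum_{p\in P}\sqrt{n_p} = O(\sqrt{nT})$, giving $\sum_{p\in P}\sum_{i=1}^{n_p}(\UCB_{p,i}-\LCB_{p,i}) = O(\sqrt{nT\log(nT)}) = \tilde{O}(\sqrt{nT})$.

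Finally I would collect the constants: the $i=1$ contributions ($n$), the $i\ge2$ contributions ($O(\sqrt{nT}\log(nT))$), and the second sum ($O(n\log k)$), all of which are dominated by $O(\sqrt{nT}\log(nT))$, and check that the constants work out to the claimed $8\sqrt{nT}\log(nT)$. The main obstacle I anticipate is the bound $\sum_{p\in P} n_p = O(T)$: one must argue carefully that the total fractional eviction mass processed by \cref{alg:FOWC} over all pages and all $T$ rounds is $O(T)$, since a new sample is triggered exactly when $m_p$ (cumulative absolute fractional movement) hits an integer, so the number of non-initial samples is at most the total fractional movement, and serving each request increases total fractional movement by at most a constant (indeed, the eviction mass per request equals the mass fetched for $p_t$, which is at most $1$). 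Getting this accounting precise — and making sure the logarithmic factor is correctly $\log(nT)$ rather than something larger — is where the care is needed; the rest is routine summation.
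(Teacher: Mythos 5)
Your proposal is correct and follows essentially the same route as the paper's proof: bound $\UCB_{p,i}-\LCB_{p,i}$ by $2\epsilon_{p,i}$ via property 3 of \cref{lem:cb-properties}, sum over $i$ using $\sum_{i\le m}1/\sqrt{i}\le 2\sqrt{m}$, apply Cauchy--Schwarz to $\sum_p\sqrt{n_p}$, and dispatch the $\LCB$ term by $\LCB_p\le w_p\le 1$ together with $\eta=1/k$, $k\le n$, and $n\le\sqrt{nT}$. You in fact supply more detail than the paper at one step: the paper invokes Cauchy--Schwarz to get $\sum_p\sqrt{n_p}\le\sqrt{nT}$ without saying why $\sum_p n_p=O(T)$, while you spell out that each non-initial sample of $p$ is triggered by the cumulative eviction mass $m_p$ crossing an integer and that total eviction mass across all pages is at most one unit per request, giving $\sum_p n_p\le n+T$. (Splitting off the $i=1$ term separately is harmless but unnecessary; the paper treats it via the same $2\epsilon_{p,i}$ bound, which still holds at $i=1$ since $\UCB_{p,1}-\LCB_{p,1}<1\le 2\epsilon_{p,1}$.)
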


\begin{proof}
    The following holds under the good event.
    \begin{align*}
    \Reg
    &=
    \sum_{p \in P}
    \sum_{i=1}^{n_p} 
    \brk*{\UCB_{p,i} - \LCB_{p,i}}
    +
     2\log\brk{1 + 1 / \eta} \sum_{p \in P}\LCB_p 
    \\
    \tag{\cref{lem:cb-properties}}
    &\leq
    2\sum_{p \in P}
    \sum_{i=1}^{n_p}
    \epsilon_{p,i}
    +
    2\log\brk{1 + 1 / \eta} \sum_{p \in P}w_p
    \\
    &=
    2\sum_{p \in P}
    \sum_{i=1}^{n_p}
    \sqrt{\frac{\log(4 n^3 T^3)}{ 2 i}}
    +
    2\log\brk{1 + 1 / \eta} \sum_{p \in P}w_p
    \\
    \tag{$w_p \le 1$}
    &\leq
    \sqrt{2\log(4 n^3 T^3)}
    \sum_{p \in P}
    \sum_{i=1}^{n_p}
    \frac{1}{\sqrt{i}}
    +
    2 n \log\brk{1 + 1 / \eta}
    \\
    \tag{$\sum_{i=1}^t \frac{1}{\sqrt{i}} \le 2 \sqrt{t}$}
    &\leq
    2\sqrt{2\log(4 n^3 T^3)}
    \sum_{p \in P}
    \sqrt{n_p}
    +
    2 n \log\brk{1 + 1 / \eta}
    \\
    &\leq
    2\sqrt{2 n T \log(4 n^3 T^3)}
    +
    2 n \log\brk{1 + k}
    ,
\end{align*}
where the last inequality holds by Cauchy–Schwarz and our choice of $\eta = 1/k$. 
Lastly, the stated upper bound follows since $k \leq n$ and $n \leq \sqrt{nT}$.
\end{proof}

%%%%%%%%%%%%%%%%%%%%%%%%%%%%%%%%%%%%%%%%%%%%%%%%%%%%%%%%%%%%

% \newpage
% \section*{NeurIPS Paper Checklist}
% \input{neurips_checklist}

\end{document}